\newcommand*{\compress}{\@minipagetrue}
\newcommand*{\BraceAmplitude}{0.4em}%
\newcommand*{\VerticalOffset}{0.5ex}%
\newcommand*{\HorizontalOffset}{-0.4em}%
\NewDocumentCommand{\InsertLeftBrace}{%
  O{} 
  O{\HorizontalOffset,\VerticalOffset} 
  m  
  m  
  m  
}{%
  \begin{tikzpicture}[overlay,remember picture]
    \coordinate (Brace Top)  at ($(#3.north) + (#2)$);
    \coordinate (Brace Bottom) at ($(#4.south) + (#2)$);
  \draw [decoration={brace,mirror,amplitude=\BraceAmplitude}, decorate, thick, draw=blue, #1]
      (Brace Top) -- (Brace Bottom) 
      node [pos=0.5, anchor=east, align=left, text width=1.45cm, color=black, xshift=\BraceAmplitude] {#5};
  \end{tikzpicture}%
}%
\newcounter{myboxcounter}
\newtcolorbox{mybox}[1]{
 colback=bg,
 colframe=blue!75!black,
 fonttitle=\bfseries,
 title=#1,
 label=mybox:\themyboxcounter
}
\newtcolorbox{ttbox}[1]{
 colback=white,
 colframe=red!75!black,
 fonttitle=\bfseries,
 title=#1
}
\newtcolorbox{uutbox}[1]{
 colback=white,
 colframe=green!50!black,
 fonttitle=\bfseries,
 title=#1
}
\colorlet{bg}{white}
\newif\ifappendix
\title{ Descriptive History Representations: \\ Learning Representations by Answering Questions }
\author{Guy Tennenholtz$^\dagger$\thanks{Correspondence to: \texttt{guytenn@gmail.com}},~~Jihwan Jeong$^\dagger$,~~Chih-Wei Hsu$^\dagger$,~~Yinlam Chow$^\ddagger$,~~Craig Boutilier$^\dagger$
\\~\\
\large $\dagger$ Google Research, $\ddagger$ Google Deepmind}
\begin{document}

\doparttoc
\faketableofcontents

\maketitle

\begin{abstract}
  Effective decision making in partially observable environments requires compressing long interaction histories into informative representations. We introduce \emph{descriptive history representations (DHRs)}: sufficient statistics characterized by their capacity to answer \emph{relevant questions} about past interactions and potential future outcomes. DHRs focus on capturing the information necessary to address task-relevant queries, providing a structured way to summarize a history for optimal control. We propose a multi-agent learning framework, involving representation, decision, and question-asking components, optimized using a joint objective that balances reward maximization with the representation's ability to answer informative questions. This yields representations that capture the salient historical details and predictive structures needed for effective decision making. We validate our approach on user modeling tasks with public movie and shopping datasets, generating interpretable textual \emph{user profiles} which serve as sufficient statistics for predicting preference-driven behavior of users.
\end{abstract}

\section{Introduction}
\label{section: introduction}

Reinforcement learning (RL) in partially observable environments is challenging: agents must make effective decisions with incomplete information, making optimal decisions dependent on the complete observation-action history \citep{aastrom1965optimal,sondik1971optimal,kaelbling1998planning}.
As such, representing the history in a compact, informative way for decision making is crucial. Traditional approaches in partially observable Markov decision processes (POMDPs) use \emph{belief states}, i.e., distributions over an underlying state space \citep{aastrom1965optimal,sondik1971optimal,kaelbling1998planning}, but require a pre-specified Markovian state space.
Conversely, directly using the entire history becomes computationally intractable as it grows. 
\emph{Predictive state representations (PSRs)} \citep{littman2001predictive,boots_PSRs:IJRR2011} address this by representing the state as a set of predictions about future observations, conditioned on specific action sequences.

In this work, we introduce \emph{descriptive history representations (DHRs)}, a framework that learns representations that focus on \emph{answering questions}. Instead of predicting low-level observations, we construct history representations that answer broad classes of predictive questions. These questions, ideally formulated flexibly (e.g., in natural language), aim to capture essential information for effective decision-making. DHRs shift the representation burden away from predicting specific, low-level observations to a higher level of abstraction, ensuring they answer \emph{relevant} queries. 

As an example, consider a recommender agent that interacts with users. A \emph{user profile}, representing a user's past interactions with the agent, should suffice to answer queries such as: ``Does the user prefer Item 1 to Item 2?''; ``How likely is the user to abandon the session?''; or open queries like  ``Write a review this user might provide for the following product: <description>.'' Such queries can be far more natural, task-relevant, easier to specify, and reason about, than the precise sequence of low-level future observations (e.g., specific interactions, sensor readings, pixel values) that simply correlate with user satisfaction. Queries of this form are the foundation of DHRs.

To \emph{learn} DHRs, we employ a multi-agent cooperative training paradigm. A \emph{representation (DHR) encoder} learns to construct a summary of the history, while an \emph{answer agent} learns to generate answers to queries about the history / future using \emph{only} the generated summary. Finally, a \emph{decision agent} determines the next action to take given the current DHR summary. The agents' objectives are aligned with expected reward maximization in the underlying environment. 

Our key contributions are as follows. (1) We introduce DHRs, defined by their ability to answer relevant questions about past interactions and future outcomes. We formally establish them as \emph{sufficient statistics} (formally defined in Section \ref{section: problem formulation}) for effective decision-making. (2) We propose a multi-agent framework for learning DHRs, using representation, decision, and question-asking components, jointly optimized to balance reward maximization with the representation’s ability to answer informative questions. (3) We demonstrate the efficacy of the learned DHRs on recommendation domains, using public movie and shopping datasets, showcasing their ability to generate predictive textual user profiles and improve recommendation quality.

\section{Problem Setting} 
\label{section: problem formulation}

We consider a \emph{partially observable environment (POE)} defined by the tuple $(\gO, \gA, T)$, where $\gO$ is the observation space, $\gA$ is the action space, and $T: (\gO \times \gA)^* \mapsto \Delta_{\gO}$ is the transition function, mapping (action-observation) histories to a distribution over observations. We assume a \emph{reward function} over histories $R: (\gO \times \gA)^* \mapsto \sR$. At each time $t$, the environment is in a history state $h_t = (o_1, a_1, \hdots, o_t)$. An agent takes an action $a_t \in \gA$, receives reward $r_t = R(h_t, a_t)$, and the environment transitions to a new state via $o_{t+1} \sim T(\cdot |h_t, a_t)$.

A \emph{policy} $\pi: (\gO \times \gA)^* \times \gO \mapsto \Delta_{\gA}$ maps histories to distributions over actions. Let $\Pi$ denote the set of policies. The \emph{value} of policy $\pi \in \Pi$ is its expected cumulative return:
$
    V(\pi) = \mathbb E\big[\sum_{t=1}^H r(h_t, a_t) | a_t \sim \pi(h_t)\big],
$
where $H$ is the horizon. 
Our objective is to find a policy $\pi^*$ with maximum value, i.e.,
$
    \pi^* \in \arg\max_{\pi} V(\pi).
$

We use the notion of an \emph{$f$-sufficient statistic} below. Let $f: \gH \times \Pi \mapsto \sR$ be a specific function of interest. We say a mapping $E: \gH \mapsto \gZ$ is an $f$-sufficient statistic of $\gH$ if, for any $h, \pi$, there is some $\tilde{f}: \gZ \times \Pi \mapsto \sR$ such that $f(h; \pi) = \tilde{f}(E(h); \pi)$ \citep{remon1984concept}. Examples include value-sufficient statistics for POMDPs such as belief state representations \citep{aastrom1965optimal}, and PSRs \citep{littman2001predictive}.
The main task of our work is to learn such a history-to-state mapping $E$ that both effectively summarizes historical information and is amenable to sequential decision making.

Given a probability function $y \sim q(y|x)$, we
often write $y \sim q_x$.
Let $\gH_t$ be the set of histories of length $t$, and $\gH = \cup_{t=1}^H \gH_t$. Similarly, $\Omega_t$ is the set of future realizations
beginning at time $t$, with $\Omega = \cup_{t=1}^H \Omega_t$. Thus
$(h_t, \omega_t) \in \gH \times \Omega$ such that $h_t = \brk*{o_1, a_1, \hdots, o_t}$ and $\omega_t = \brk*{a_t, o_{t+1}, \hdots, o_H}$.
For any $\pi\in\Pi$, $d^\pi(\omega | h)$ is the probability of a future $\omega$ given $\pi$ starting at $h$.

\section{Descriptive History Representations (DHRs)}

This section formally defines DHRs. We first introduce the \emph{Question-Answer-space (QA-space)}, which structures historical information through questions and answers, and then detail how DHRs serve as compact, actionable summaries derived from these spaces.

\subsection{QA-Spaces}

Informally, a \emph{QA-space} comprises questions and answers over histories. In a conversational recommender, given a user's interaction history, useful questions might query latent preferences (e.g., ``which brand does the user prefer?''), with answers being (distributions over) preference attributes (e.g., ``the user prefers brand A over B with probability $0.8$"). Alternatively, a question could probe user behavior (e.g., ``Write a review this user might provide for item A''), answered by a distribution over possible reviews. Formally, a QA-space defines questions and answers which adhere to a particular functional form, requiring answers to be outcomes of questions.
\begin{definition}[QA-space]
    A QA-space is a tuple $(\gQ, \gY, \gX, \nu)$, where $\gQ$ is a question space, $\gY$ is an answer space, $\gX$ is a context space, and $\nu: \gX \times \gQ \mapsto \Delta_{\gY}$ is an answer function.\footnote{When $\gY$ is continuous, $\nu$ maps questions and contexts to 
    probability measures on the Borel sets of $\gY$.}
\end{definition}
While QA-spaces are general, in this work we focus on QA-spaces that ask \emph{semantically meaningful} and \emph{interpretable} questions about histories. Thus, the context space $\gX$ is the space of histories~$\gH$, and the answer function $\nu$ poses informative questions about those histories (e.g., summarization of past interactions, or predictions of future events). Notice that the answer function $\nu$ must generally return a \emph{distribution} over the answer space given the partially observable nature of the environment.

We are primarily interested in QA-spaces where the collective answers to questions provide all necessary information for a given purpose (like decision-making or prediction), thereby acting as sufficient statistics. In the recommender setting, a QA-space with user preference questions can serve as a sufficient statistic for future behaviors (e.g., item acceptance, session abandonment) by encoding inferred latent preferences. This motivates the definition of a \emph{sufficient QA-space}, which asks questions that induce a sufficient statistic.
\begin{definition}[$f$-Sufficient QA-space]
    A QA-space $(\gQ, \gY, \gX, \nu)$ is $f$-sufficient, if for any $x \in \gX$, there is a subset $Q^*_x \subseteq \gQ$ such that $\brk[c]*{\brk*{q, \nu(x,q)}}_{q \in Q^*_x}$ is an $f$-sufficient statistic. We call $Q^*_x$ the set of ($f$-) sufficient questions for context $x$, and $Q^* = \cup_{x \in \gX} Q^*_x$ the set of sufficient questions.
\end{definition}
One example is the \emph{value-sufficient} QA-space, which enables prediction of the value ${V^\pi(h) = \expect*{}{\sum_{t=k}^H r(h_t, a_t) | h_k=h}}$ of any policy $\pi$ and history $h \in \gH$. 

\begin{figure}[t!]
    \centering
    \includegraphics[width=.9\textwidth]{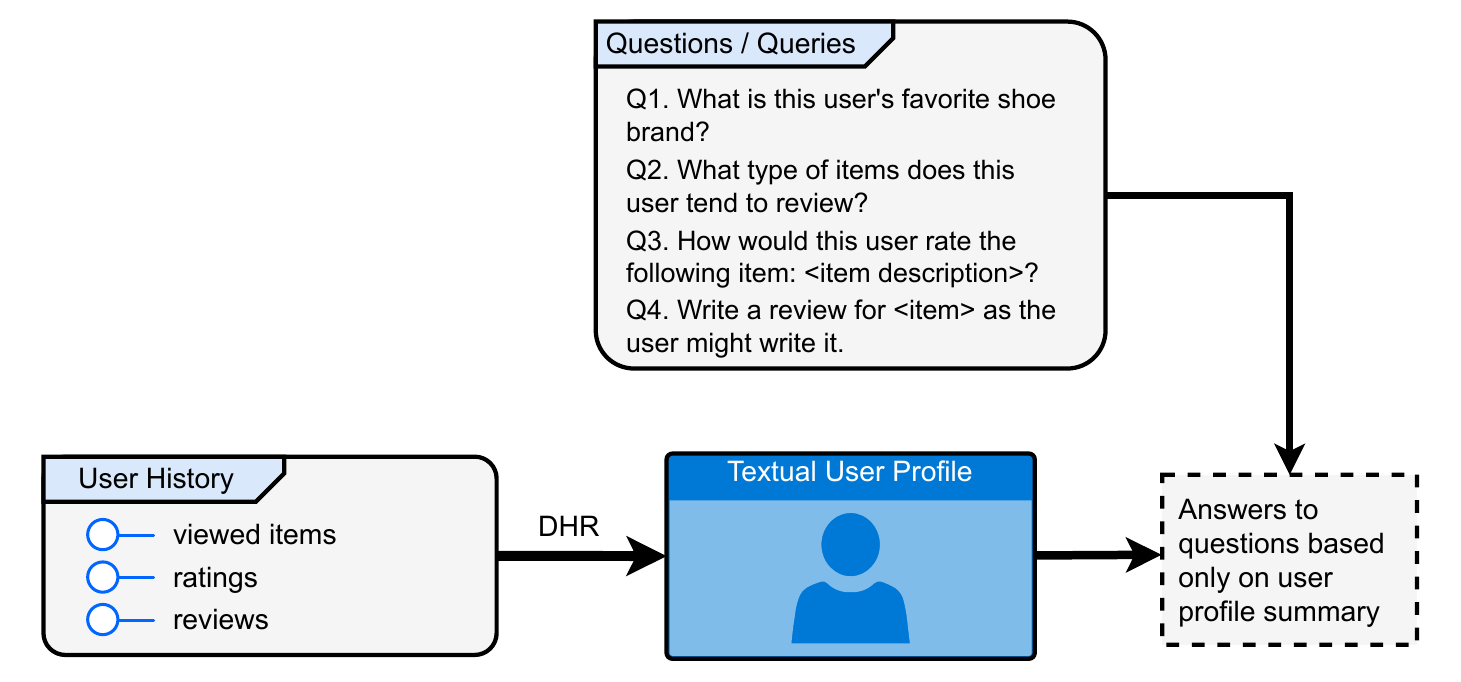}
    \caption{\footnotesize An illustrative example of a descriptive history representation (DHR) which maps a user's history to a compact user profile, which is sufficient for answering questions about the user's preferences.}
\label{fig: user profile}
\end{figure}

\subsection{Examples of QA-Spaces}

QA-spaces encompass a variety of representation techniques, including classical POMDP methods.

\textbf{Belief States (\citet{kaelbling1998planning}).} 
For POMDPs with state space $\gS$, belief states $b(s|h)$ are distributions over states which constitute a sufficient statistic for value prediction. We define a QA-space for belief states as follows. Let $\gQ = \gS$, $\gY = [0,1]$, and answer function $\nu(h,s) = b(s|h)$. For any history $h \in \gH$, the set of question-answer pairs $\brk[c]*{(s, b(s|h))}_{s \in \gS}$ constitutes a sufficient QA-space, since $b$ is a sufficient statistic of the history.

\textbf{Predictive State Representations (PSRs, \citet{littman2001predictive}).} 
A PSR is defined by a vector of probabilities over a set of core tests $\brk*{P(\omega^1 | h), \hdots, P(\omega^k | h)}$, for any history $h\in\gH$. Like belief states, PSRs are sufficient statistics. To construct a QA-space for PSRs, let $\gQ$ be the set of tests, $\gY = [0,1]$ as before, and define an answer function as $\nu(h,\omega) = P(\omega|h)$.

\textbf{Item Recommendation (informal).} Consider a user interacting with a recommender, where actions are recommended item slates, and observations are user interactions (e.g., choices, ratings, etc.). The recommender exploits a \emph{user profile} representing user preferences learned from interactions (see \Cref{fig: user profile}). Given a user's history and candidate items, useful questions assess preferences over pairs of items, or the likelihood of accepting recommendations. A sufficient statistic includes answers which are predictive of preferences, behavior, and suffice for optimal recommendations.

\subsection{Descriptive History Representations of QA-Spaces} 
\label{section: DHR}

While QA-spaces offer a powerful mechanism for identifying vital information within a history~$h$ by finding answers $\nu(h, q)$ to a set of sufficient questions, they do not inherently prescribe how this distilled information should be structured into an effective representation. 
We thus seek to leverage the insights obtained from QA-spaces to learn a condensed representation, which retains the information required to answer \emph{sufficient} questions. 

\begin{definition}[Descriptive History Representation]
\label{def: DHR}
    Let $(\gQ, \gY, \gH, \nu)$ be an $f$-sufficient QA-space. An embedding $E: \gH \mapsto \gZ$ is called a Descriptive History Representation (DHR) if there exists $\nu_A: \gZ \times \gQ \mapsto \Delta_{\gY}$ such that $\nu_A(z, q) = \nu(h, q)$, for any $h \in \gH$, and any sufficient question $q \in Q^*_h$.
\end{definition}

To this end, a DHR encapsulates the essence of the history for question answering by learning a representation $E : \mathcal{H} \mapsto \gZ$ that maps a history $h \in \mathcal{H}$ to a compact representation $z \in \gZ$. In general, $\gZ$ can be arbitrary; e.g., if $\gZ \subseteq \sR^d$, then $E$ is a classic embedding. In our recommendation example, $z \in \gZ$ could be a \emph{textual user profile}, describing a user's preferences in natural language (see \Cref{fig: user profile} for an illustration). The central idea of a DHR  is that the representation acts as a proxy for the history, preserving precisely the information required to answer the relevant questions defined by the QA-space. Thus, instead of computing $\nu(h, q)$ from the history $h$, a \emph{compressed} answer function $\nu_A : \gZ \times Q \to \Delta_Y$ produces the same answer distribution using only $z = E(h)$. 
For example, if the textual user profile 
mentioned above is a DHR for recommendation, it should allow one to predict a user's ranking over new items, and other relevant behaviors, without access to the raw history. 


A DHR, as denoted by $z = E(h)$, is a powerful concept because it effectively compresses a potentially complex history $h$ into a more compact form $z$. By preserving the means to answer sufficient questions, the DHR encapsulates all task-relevant information defined by the QA-space, suggesting that it is a sufficient statistics of a given task. Indeed, in the following proposition, we establish this connection (see \Cref{appendix: proof of dhr proposition} for proof).
\begin{restatable}{proposition}{dhrsufficientproposition}
\label{proposition: DHR is sufficient}
    Let $E: \gH \mapsto \gZ$ be a DHR (\Cref{def: DHR}). Then it is also an $f$-sufficient statistic.
\end{restatable}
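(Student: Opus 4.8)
The plan is to reduce $f$-sufficiency of the embedding $E$ to a factorization that chains together the two hypotheses we are handed: the $f$-sufficiency of the underlying QA-space, and the reconstruction guarantee built into the definition of a DHR. For a history $h$, write $S(h) := \brk[c]*{\brk*{q, \nu(h,q)}}_{q \in Q^*_h}$ for the collection of sufficient question--answer pairs. By the definition of an $f$-sufficient QA-space, $S$ is itself an $f$-sufficient statistic, so there exists a map $g$ with $f(h;\pi) = g\brk*{S(h); \pi}$ for every $h \in \gH$ and $\pi \in \Pi$. Throughout I read the sufficiency condition in the only nontrivial way, namely that a single $\tilde{f}$ works for all $h,\pi$.

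The key step is to recover $S(h)$ from $z = E(h)$ alone, which is exactly what the DHR property supplies: there is a decoder $\nu_A$ with $\nu_A(E(h), q) = \nu(h,q)$ for every sufficient question $q \in Q^*_h$. Hence each answer entry of $S(h)$ equals $\nu_A(z,q)$, a quantity depending on $h$ only through $z$, so $S(h) = \brk[c]*{\brk*{q, \nu_A(z,q)}}_{q \in Q^*_h}$. Composing the reconstruction with $g$ then yields a map $\tilde{f}$ for which $f(h;\pi) = \tilde{f}\brk*{E(h);\pi}$, which is precisely the factorization demanded of an $f$-sufficient statistic. Equivalently, one can argue via level sets: it suffices to show that $E(h) = E(h')$ forces $S(h) = S(h')$, and then invoke the sufficiency of $S$ to conclude $f(h;\pi) = f(h';\pi)$ for all $\pi$, which gives a well-defined $\tilde{f}$ on the image of $E$.

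The subtle point, and the step I would treat most carefully, is that the index set $Q^*_h$ over which $S(h)$ is formed depends on $h$, whereas the decoded answers $\nu_A(z,\cdot)$ depend on $h$ only through $z$. For the map $z \mapsto S(h)$ to be well defined we need the \emph{entire} collection---not just the answer values---to depend on $h$ only through $z = E(h)$. When $E(h) = E(h') = z$, the DHR identity forces $\nu_A(z,\cdot)$ to agree with both $\nu(h,\cdot)$ and $\nu(h',\cdot)$ on their respective sufficient sets, so on the overlap $Q^*_h \cap Q^*_{h'}$ the answers coincide; the remaining gap is that the two index sets could differ. The clean way to close this is to observe that the sufficient-question set is determined by the task-relevant content that $E$ is built to capture, i.e.\ to take $Q^*_h$ as a function of $E(h)$ (as is implicit in the setup); under this condition the recovery $S(h) = \Phi\brk*{E(h)}$ holds for a well-defined $\Phi$, and the composition $\tilde{f} = g \circ \Phi$ completes the argument. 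I would therefore spend the bulk of the writeup justifying this well-definedness, since the rest is a routine substitution.
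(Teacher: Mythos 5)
Your proposal is correct and takes essentially the same route as the paper's proof: both factor $f$ through the set of sufficient question--answer pairs $\{(q,\nu(h,q))\}_{q\in Q^*_h}$, use the DHR identity $\nu_A(E(h),q)=\nu(h,q)$ to recover those answers from $z=E(h)$, and hinge on the same subtle point you flag, namely that the index set $Q^*_h$ must depend on $h$ only through $E(h)$. The paper closes that gap exactly as you propose (its ``Case 2'' posits $Q^*_h = G(E(h))$ for a map $G$ from representations to question sets), so your emphasis on justifying this well-definedness mirrors the paper's own treatment.
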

Having established the DHR and its connection to sufficient statistics, in the following section we leverage this framework to address partially observable RL problems. 

%

\begin{figure}[t!]
    \centering
    \includegraphics[width=0.95\textwidth]{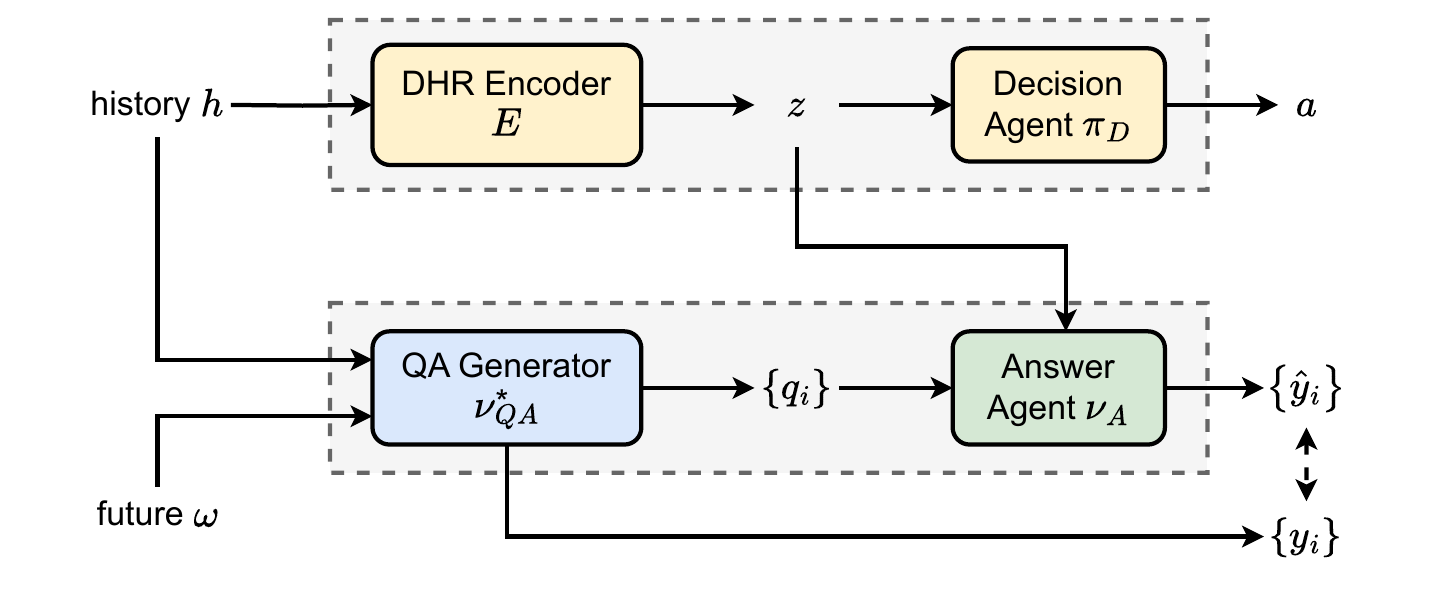}
    \caption{\footnotesize An illustration of our learning framework. A policy $\pi$ is composed of a DHR encoder $E$ and a decision policy $\pi_D$. The DHR embedding takes a history $h$ and creates a representation $z$. The decision policy takes actions in the environment given the representation $z$. In order to learn a DHR ($E$), we use a QA generator, responsible for generating history ($h$) and future ($\omega$)-dependent question-answer pairs. Finally, an answer agent is responsible for answering said questions with access to only the representation $z$. The training procedure is described in \Cref{section: DHR} and \Cref{alg: DHRL}.}
\label{fig: dhr framework}
\end{figure}

\begin{algorithm}[t!]
\caption{Descriptive History Representation Learning (DHRL)}
\label{alg: DHRL}
\begin{algorithmic}[1] 
    \State Initialize $\theta_E, \theta_A, \theta_D, \theta_g$.
    \For{each training iteration}
        \State Sample trajectory $\tau = \brk*{o_1, a_1, \hdots, o_H}$ and rewards $r_1, \hdots, r_H$. \Comment{Using $E, \pi_D$}
        \For{$t=1, \hdots, H$}
            \State Split trajectory to history $h_t$ and future $\omega_t$.
            \State Sample QA pairs $\{(q_{kt}, y_{kt})\}_{k=1}^K \sim \nu^*_{QA}(h_t, \omega_t)$. 
            \State Sample representation $z_t \sim E(h_t)$.
            \State Predict answers $\hat{y}_{kt} = \nu_A(z_t, q_{kt})$ for $k=1 \hdots K$.
        \EndFor
        \State Update $\theta_g$ according to $\frac{1}{HK} \sum_{t=1}^H \sum_{k=1}^K \nabla_{\theta_g} \brk[s]*{  f^*(g_{\theta_g}(h_t, q_{kt}, \hat{y}_{kt})) - g_{\theta_g}(h_t, q_{kt}, y_{kt})}$. \label{alg:update_g}
        \State Update $\theta_D, \theta_A, \theta_E$ via RL using rewards $r^D, r^A, r^E$  (See \Cref{section: dhrl}). \label{alg:update_policy}
    \EndFor
\end{algorithmic}
\end{algorithm}

\section{Learning Descriptive History Representations}
\label{section: dhrl}

Our goal is to learn a DHR that is effective for downstream decision making. Specifically, we learn a \emph{DHR policy} $\pi = (E, \pi_D)$ consisting of two components: a \emph{DHR embedding} $E: \gH \mapsto \Delta_{\gZ}$ (i.e., a history \emph{encoder}); and 
 a \emph{decision policy} $\pi_D: \gZ \mapsto \Delta_{\gA}$ over summarized histories. The DHR policy induces a policy over histories in the obvious way: $\pi(h) = \expect*{z \sim E(h)}{\pi_D(z)}$. To ensure $E$ is indeed a DHR (i.e., a sufficient statistic), we train an \emph{answer function} $\nu_A: \gZ \times \gQ \mapsto \gY$ for any sufficient question $q \in Q^*_h$ (see \Cref{def: DHR}). We detail the training objective and methods in this section.
 
\paragraph{Generating Sufficient Questions and Answers.}
 To train $\nu_A$, we need to define the set of sufficient questions $Q^*$. To do this, we introduce an oracle \emph{QA-generator}---only required during training---which provides ground-truth question-answer pairs $(q,y)$ that define the content used to train the DHR. To obtain a QA-generator, we use full trajectory realizations; i.e., a trajectory $\tau:=(h, \omega)$, where $h$ is the history, and $\omega$ is its future realization. Here, the QA generator leverages both $h$ and $\omega$ to construct questions and their corresponding ground-truth answers.
 
 Designing an oracle QA-generator to produce sufficient questions can be challenging. In  \Cref{sec:experiments}, we detail how we design a QA-generator using large language models (LLMs) and future user interactions (i.e., $\omega$) for recommendation tasks. More generally, an LLM-based QA-generator can analyze $h$ and a specific future realization $\omega$ to generate a template question about a specific class of events in $\omega$, with the answer directly derived from $\omega$. We refer to \Cref{appendix: qa generator} for a detailed discussion on the construction and training of a QA-generator (e.g., through adversarial training). 
 
 Formally, a QA generator is a mapping $\nu^*_{QA}: \gH \times \Omega \mapsto \Delta_{\gQ \times \gY}$. Its purpose is to create sufficient questions and their answers using a history and its corresponding future. For clarity (with a slight abuse of notation), we decompose a QA generator's output distribution as ${\nu^*_{QA}(h, \omega) = \nu^*_A(y | q, h, \omega)\nu^*_Q(q | h, \omega)}$. Importantly, when training our answer agent $\nu_A$, we assume access to samples from $\nu^*_{QA}$.

\paragraph{Training Objective.}
We define the joint objective of the DHR encoder, answer agent, and decision agent to maximize value and match the distribution of ground-truth answers (i.e., match $\nu^*_A$):
\begin{align}
\label{eq: DHR primal objective}
    \max_{E, \nu_A, \pi_D} 
        (1-\lambda) V(\pi) 
        - 
        \lambda 
        \fdiv{d^{\nu^*_A}}{d^{\nu_A}},
\tag{OPT 1}
\end{align}
where the hyper-parameter $\lambda \in [0,1]$ balances between RL and DHR learning, and
\begin{align*}
    d^{\nu_A}(y,h,q) &= P\brk*{y,h,q \mid q \sim \nu^*_Q(h, \omega), y \sim \nu_A(z, q), z \sim E(h), \omega \sim d^\pi(\omega \mid h)}, \\
    d^{\nu^*_A}(y,h,q) &= P\brk*{y,h,q \mid q,y \sim \nu^*_{QA}(h, \omega), \omega \sim d^\pi(\omega \mid h)}.
\end{align*}
Here, $D_f$ denotes a specific $f$-divergence (e.g., TV-distance, $\chi^2$-divergence, KL-divergence). 

Directly optimizing \Cref{eq: DHR primal objective} is challenging, primarily due to the difficulty in accurately modeling the distributions $d^{\nu_A}$ and $d^{\nu^*_A}$. Instead, we solve its variational form:
\begin{align}
    \label{eq: dual improved}
    \max_{\pi_D, E, \nu_A}
    \min_{g: \gH \times \gQ \times \gY \mapsto \sR} 
    \expect*{}{
        (1 - \lambda) r(h, a) 
        +
        \lambda
        \expect*{y \sim \nu_A(q,h,\omega)}{f^*(g(h,q,y))}
        - 
        \lambda 
        \expect*{y \sim \nu^*_A(z, q)}{g(h,q,y)}
    },
    \tag{OPT 2}
\end{align}
where $f^*$ is the convex conjugate of the convex function $f$ defining the divergence, and $\mathbb{E}$ is taken w.r.t. $h, \omega \sim d^\pi, q \sim \nu^*_Q(h,\omega), z \sim E(h)$. Instead of directly estimating and matching question-answer-history distributions we use samples from $\nu_A$, the DHR's (current) answers, and $\nu^*_{A}$, the QA-generator's ground-truth answers, to train a discriminator $g$. The max-min objective in \Cref{eq: dual improved} can be solved efficiently by iteratively training the discriminator $g$ and the agents ($\pi_D, E, \nu_A$), as we discuss next.

\paragraph{DHR Learning (DHRL).}

We now detail the \emph{DHR learning (DHRL)} algorithm (see \Cref{alg: DHRL}). The complete learning framework is depicted in \Cref{fig: dhr framework}. DHRL trains the DHR encoder, answer agent, decision agent, and discriminator, with parameters $\theta_E, \theta_A, \theta_D, \theta_g$, respectively. It uses discriminator-based learning per \Cref{eq: dual improved}. DHRL first samples trajectories using $E$ and $\pi_{D}$ (line 3). These trajectories are used in hindsight: each trajectory is split into histories $h_t$ and their corresponding future realizations $\omega_t$ (line 5). The QA-generator $\nu^*_{QA}$ uses both $h_t$ and $\omega_t$ to generate ground-truth question-answer pairs $(q_k, y_k)$ (line 6). A representation $z_t \sim E(h_t)$ is sampled (line 7), and the answer agent predicts answers $\hat{y}_k = \nu_A(z_t, q_k)$ according to that representation (line 8). 

The collected batch data is used to update the discriminator $g$ (line 10) via gradient descent w.r.t. the objective $\expect*{}{  
\expect*{y \sim \nu^*_A(q,h,\omega)}{f^*(g(h,q,y))}
-
\expect*{y \sim \nu_A(z, q)}{g(h,q,y)}}$. The DHR encoder $E$, answer agent $\nu_A$ and decision agent $\pi_D$ are updated using RL (line 11), based on reward signals derived from objective \Cref{eq: dual improved}. Specifically, we update the decision agent $\pi_D$ using reward $r^D = r(h,a)$, the answer agent with reward $r^A = f^*(g(h, q, \hat{y}))$, and the DHR embedding with reward $r^E = (1-\lambda)R^D + \lambda R^A$. These updates drive the system toward a DHR that both maximizes rewards and provides accurate answers according to the QA distribution and the learned discriminator.

\section{Experiments}
\label{sec:experiments}

Our experiments are designed to validate DHRL, focusing on its ability to generate predictive representations that answer relevant questions. We demonstrate that DHRL can produce high-quality DHRs in recommendation domains, where we use DHRs to generate textual user profiles which (1) accurately answer questions about user preferences and future behavior; (2) induce informative, coherent summaries of user history; and (3) effectively support a downstream recommendation task.

\subsection{Experimental Setup.}

We use two datasets in our experiments: MovieLens 25M \citep{harper2015movielens}, and Amazon Reviews \citep{amazon_reviews} (specifically, the \emph{Clothing, Shoes and Jewelry} category),
consistent with prior work \citep{tennenholtz2024demystifying,tennenholtz2024embedding}.
User histories $h_t = (o_1, a_1, ..., o_t)$ consist of sequences of observations (item titles, ratings, and, for Amazon, descriptions, prices and reviews) and actions (recommended items). This history $h_t$ serves as input to the representation encoder $E$, which creates a textual user profile (see examples of user profiles in \Cref{appendix: qualitative results}). Future user interactions $\omega_t = (a_t, o_{t+1}, ..., o_H)$ are only used in hindsight to construct the question-answer pairs for training. 

\textbf{Question Generation.}\hspace*{1mm}
For both MovieLens and Amazon, questions $q_k$ compare pairs of held-out movies (e.g., ``Rank the following movies based on the user's preferences: [<movie\_1>, <movie\_2>]''), with the ground-truth answer $y_k$ taken from future user ratings in $\omega_t$. Multiple ranking questions are generated per history, each comparing a distinct movie pair (see \Cref{appendix: agent prompts,appendix: qualitative results} for examples of prompts, questions, and answer formats).
For ranking questions, models must adhere to a specific format (see \Cref{appendix: agent prompts}), and a negative reward is given for parsing failures. For Amazon Reviews, we augment the question set with review generation queries $q_{\text{review}}$ which ask ``Write a review for item <item description> as <user> might write it.'' Here, the ground-truth answer $y_{\text{review}}$ is the user's textual review of that item from
$\omega_t$. For exhaustive details see \Cref{appendix: QA generator experiment design}.

\textbf{Training Methodology.}\hspace*{1mm}
We train $E$ and $\nu_A$ via standard policy gradient (PG). We use Gemini 1.5 Flash \citep{team2024gemini} and Gemma V3 (4B, 12B) \citep{team2025gemma} as our language models. All agents use the same base model and share parameters, differentiated only by their  prompts. Training details and prompts used for training our agents are provided in \Cref{appendix: implementation details,appendix: agent prompts}. 

Our decision agent $\pi_D$ is fine-tuned to make item recommendations to the user given their generated profile (DHR). Specifically, the agent is given the profile (critically, it does not have access to the full user history), and a set of held-out items and is asked to recommend the best item in the set, receiving a reward (between 0 and 1) based on the user's rating for the item (higher ratings induce higher rewards).
Unless otherwise stated, we use a Gemma V3 4B model, a 256-token profile limit, $K=5$ item comparison questions, one review question (Amazon only), a history length of $H=10$ interactions, and TV-distance as our divergence. Models generally converge in under 1000 iterations, hence we report results based on 1000 training iterations.

\begin{table}[t!]
\definecolor{lightgray}{gray}{0.9} 
\centering
\caption{\footnotesize Performance of DHRL with default settings (profile with max. 256 tokens, 5 questions, 10 history interactions, TV-divergence) after 1,000 iterations for Gemma V3 4B, 12B, and Gemini 1.5 Flash models.}
\label{tab:main_results}
\begin{tabular}{lccccc}
\toprule
\bf Model \& Method & \bf \makecell{Prediction \\ Accuracy \\ (w.r.t. GT)} & \bf \makecell{Rec. \\ Reward \\ ($\boldsymbol{r^D}$)} & \bf \makecell{Profile-History \\ Consistency \\ (AI/Human)} & \bf \makecell{~ Prediction \\ Fidelity \\ (AI/Human)} & \bf \makecell{Review \\ Quality \\ (AI/Human)} \\
\midrule\midrule
\multicolumn{6}{c}{Task: recommendation using Amazon products user profiles} \\
\midrule
Gemma V3 4B & 0.34  & 0.54 & 3.46~/~3.28 & 0.44~/~0.47 & 0.15~/~0.27\\
\rowcolor{lightgray} Gemma V3 4B+DHRL  & 0.71 & 0.83 & 3.41~/~3.85 & 0.56~/~0.53 & 0.85~/~0.73\\
Gemma V3 12B & 0.67  & 0.78 & 4.32~/~4.18 & 0.34~/~0.39 & 0.17~/~0.35\\
\rowcolor{lightgray} Gemma V3 12B+DHRL  & 0.75 & 0.84 & 4.39~/~4.37 & 0.66~/~0.61 & 0.83~/~0.65\\
Gemini 1.5 Flash & 0.69  & 0.8 & 4.41~/~4.3 & 0.38~/~0.41 & 0.17~/~0.38\\
\rowcolor{lightgray} Gemini 1.5\ Flash+DHRL & 0.74  & 0.86 & 4.46~/~4.34 & 0.62~/~0.59 & 0.83~/~0.62\\
\midrule\midrule
\multicolumn{6}{c}{Task: recommendation using MovieLens user profiles} \\
\midrule
Gemma V3 4B & 0.37  & 0.61 & 3.72~/~3.96 & 0.32~/~0.45 & -\\
\rowcolor{lightgray} Gemma V3 4B+DHRL & 0.74 & 0.79 & 3.84~/~4.24 & 0.68~/~0.55 & -\\
Gemma V3 12B & 0.58  & 0.64 & 4.66~/~4.58 & 0.35~/~0.42 & -\\
\rowcolor{lightgray} Gemma V3 12B+DHRL  & 0.84 & 0.93 & 4.62~/~4.71 & 0.76~/~0.66 & -\\
Gemini 1.5 Flash & 0.62 & 0.68 & 4.69~/~4.53 & 0.38~/~0.39 & -\\
\rowcolor{lightgray} Gemini 1.5\ Flash+DHRL & 0.82 & 0.92 & 4.74~/~4.7 & 0.77~/~0.62 & -\\
\bottomrule
\end{tabular}
\end{table}

\textbf{Evaluation Metrics.}\hspace*{1mm}
The generated DHRs (user profiles $z$) and the accuracy of the answer agent $\nu_A$ are evaluated using three main criteria: prediction scores, recommendation quality, and qualitative AI and human evaluation. %
First, we measure \emph{predictive accuracy}. For ranking questions, which compare pairs of items, the answer agent $\nu_A$ predicts the user's preferred item given profile $z$. Predictive accuracy is the fraction of pairwise comparisons the agent predicts correctly (w.r.t.\ ground-truth future user ratings).
Second, we evaluate \emph{recommendation reward}, which assesses the utility of generated DHRs for our downstream task, i.e., a decision policy $\pi_D$ trained to use the user profile to recommend items. Given a DHR and a held-out set of unseen items, $\pi_D$ recommends an item from the set it predicts the user most prefers. Recommendation reward reflects the user's true rating for this item (normalized to $[0,1]$). 
Finally, we qualitatively evaluate our results using both AI and human feedback. For AI feedback, we use Gemini 2.5 Pro, and for human evaluation we employ 24 raters\footnote{Raters were paid contractors. They received their standard contracted wage, which is above the living wage in their country of employment.}. 

Our AI / human evaluation consists of three criteria:
\begin{enumerate}[wide, labelwidth=!, labelindent=0pt]
\item \emph{Profile-History Consistency}: Raters are given a user interaction history $h_t$ and its generated textual profile $z_t$. They rate (1-5 scale) how accurately and coherently $z_t$ summarizes the information in $h_t$.
\item \emph{Prediction Fidelity}: Raters are given a generated user profile $z_t$ and are asked to predict the user's preferences over a set of held-out item pairs
(mimicking the task of the answer agent). We compute overall rater prediction accuracy w.r.t.\ ground truth ratings. We report the win-rate between the rater prediction accuracy of the baseline method against its DHRL counterpart. Note that \emph{human performance on this task may be inherently limited}: predicting preferences from a textual summary with no prior training is challenging (potentially biasing our  results).
\item \emph{Review Quality} (Amazon only): Raters are shown the user profile $z_t$ and two reviews for a held-out user item, one generated by the baseline model (i.e., prior to DHRL training) and another by the DHRL model. Raters are asked which of the two reviews is more likely to have been written by the user (given the profile $z_t$). We report the win-rate between the DHRL-generated review and the baseline-generated review, indicating its perceived authenticity and relevance. Examples of generated user profiles and reviews are included in \Cref{appendix: qualitative results}.
\end{enumerate}

\begin{figure*}[t!]
    \centering
    \includegraphics[width=0.32\linewidth]{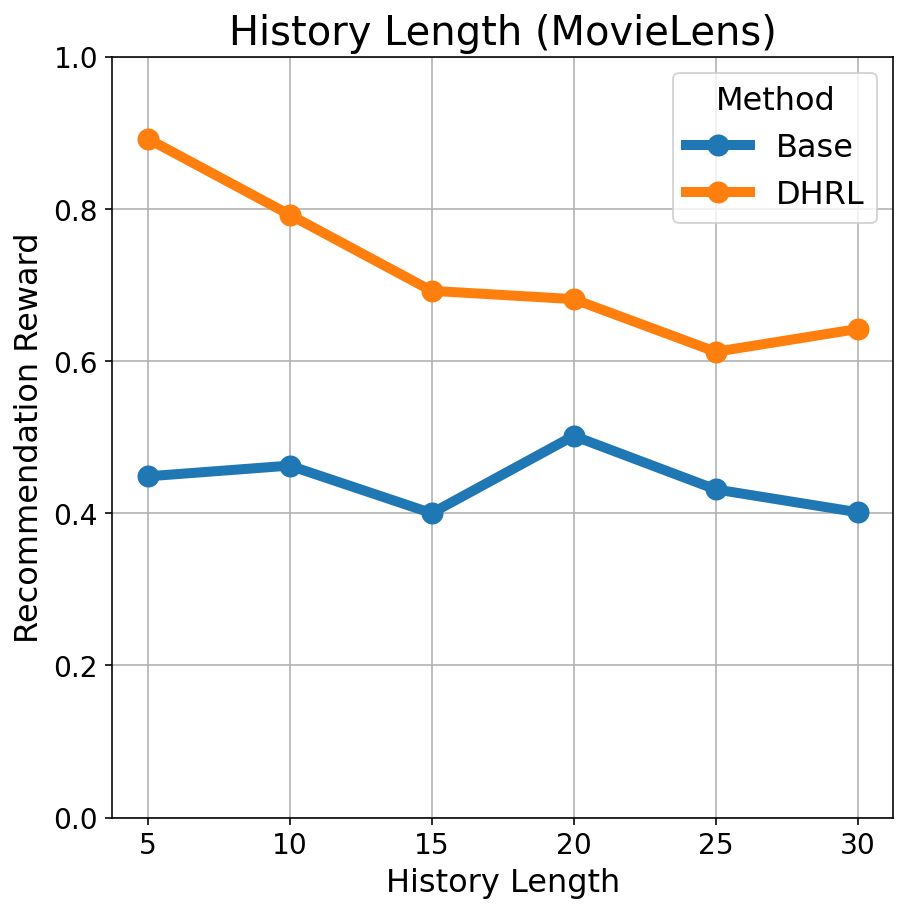}
    \includegraphics[width=0.32\linewidth]{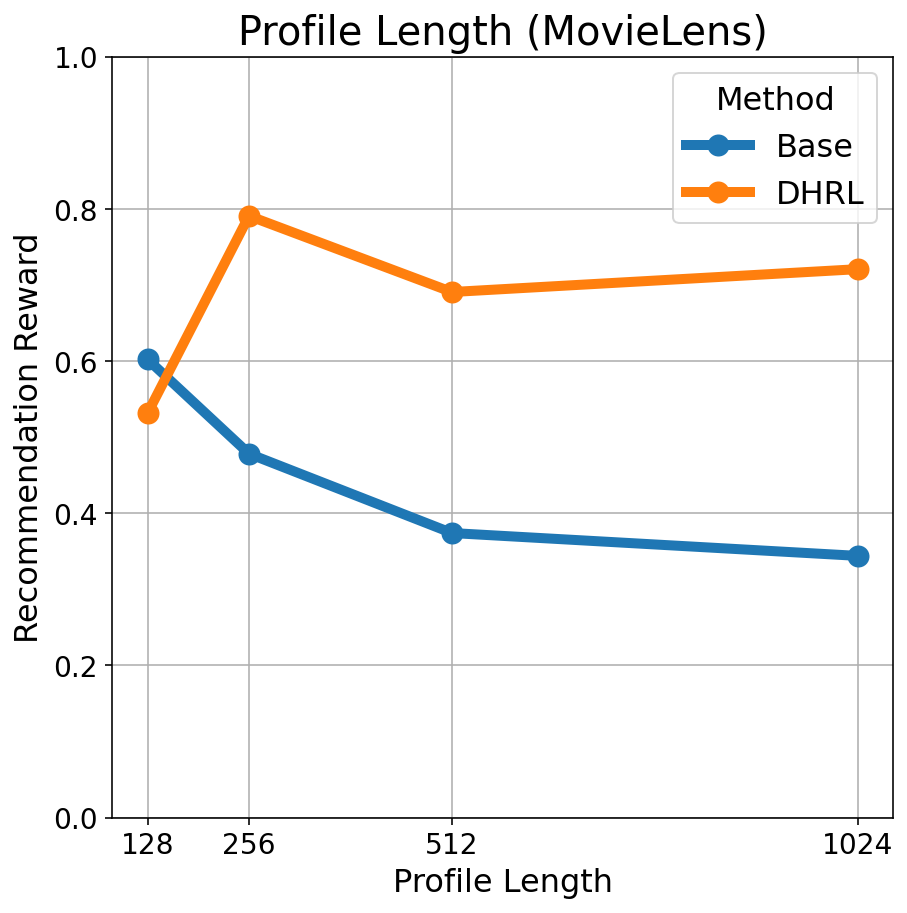}
    \includegraphics[width=0.32\linewidth]{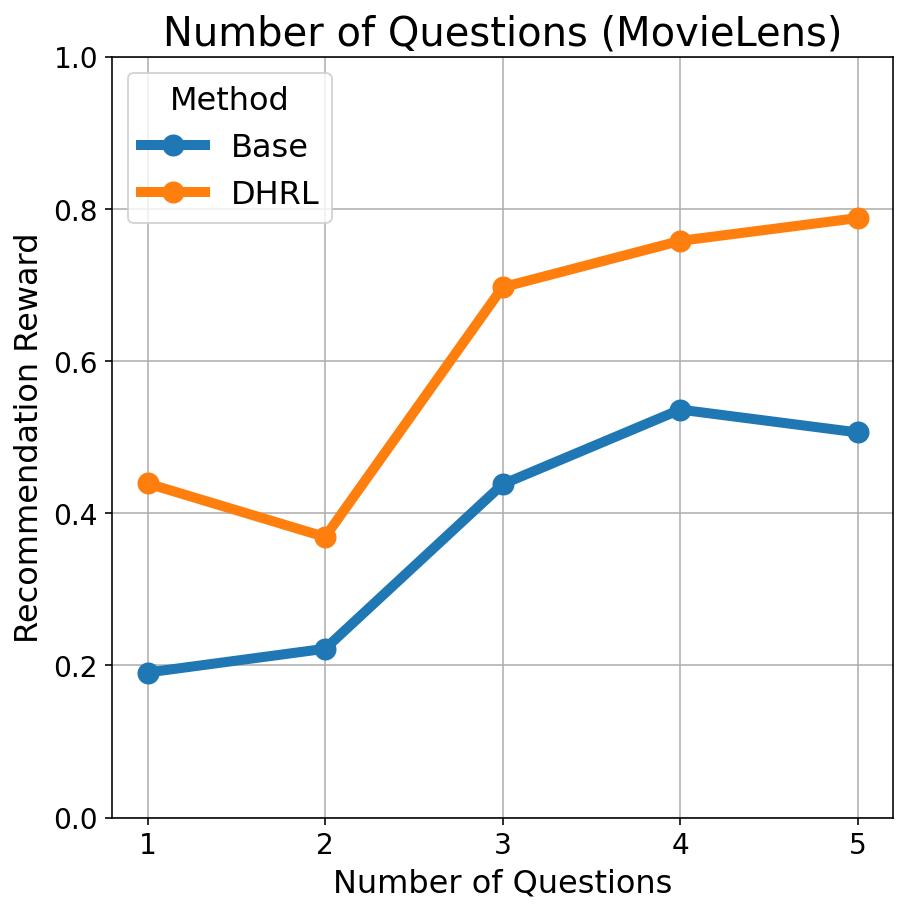}
    \caption{\footnotesize Ablation study on the history length (left), profile length (middle) and number of questions (right) on the MovieLens dataset, showing recommendation reward. DHRL (orange) consistently outperforms the baseline (blue) in all setups. We find that histories of short length (e.g., 5-10 interactions) are often sufficient for learning DHRs in our benchmarks. There is a tradeoff in selecting a profile length to achieve optimal performance, with an intermediate value (e.g., 256 tokens) often achieving best performance. Finally, the ablation study on the number of  questions shows recommendation quality generally improves as the number of questions increases.}
    \label{fig:ablations} 
\end{figure*}

\subsection{Results}

\Cref{tab:main_results} summarizes the performance of DHRL under our default configuration across models and datasets (confidence intervals provided in \Cref{appendix: misssing results}). We see that incorporating DHRL leads to substantial improvements in predictive accuracy for ranking tasks. Furthermore, AI and human evaluators rated the generated profiles favorably w.r.t.\ consistency with user history and preference predictive ability, suggesting that textual DHRs  encapsulate relevant user information effectively. We find human and AI evaluations results to be highly correlated. The notably higher recommendation rewards attained by DHRL-enhanced models confirm that DHR representations are beneficial for downstream decision-making. For Amazon Reviews, review quality metrics also indicate a strong preference for reviews conditioned on DHR profiles vs.\ baseline-generated reviews, highlighting the improved coherence and user-alignment of the generated text.

\textbf{History Length.}
The impact of history length on recommendation reward is shown in \Cref{fig:ablations} (left). DHRL consistently outperforms the baseline across all evaluated history lengths. Notably, for DHRL, optimal performance is observed with shorter interaction histories, typically around 5 to 10 interactions in our MovieLens benchmark. Beyond this range, increasing the history length tends to result in a decline in recommendation reward for DHRL, although it remains superior to the baseline. This suggests that relatively concise histories suffice for learning effective DHRs in our benchmark tasks,
and excessively long histories may yield insignificant marginal gains (and could introduce noise and complexity).

\textbf{Profile Length.}
\Cref{fig:ablations} (middle) shows the impact of DHR profile length on recommendation reward.
Performance generally improves as the profile token limit increases,
allowing profiles to encode more detailed information needed for accurate question answering and successful recommendation. However, this trend does not continue indefinitely; beyond an optimal length (e.g., around 256 tokens for MovieLens), gains diminish, with long profiles leading to a plateau or even a slight decrease in performance. This might stem from challenges in exploiting an overly large representation space, or increased difficulty for the answer agent. Our default profile length strikes a favorable balance between expressiveness and conciseness.

\textbf{Number of Questions.} 
\Cref{fig:ablations} (right) shows that recommendation reward generally increases with $K$, the number of guiding questions. This emphasizes the importance of using a sufficiently diverse set of questions to guide the representation learning process. While a few questions might provide enough signal for basic preference prediction, a richer set guides the DHR encoder $E$ towards a more robust representation of user preferences, which is crucial for maximizing downstream task performance as it likely enhances generalization.

\textbf{$f$-Divergence.}
The choice of $f$-divergence significantly impacts performance, as detailed in \Cref{tab:divergence}. Among the tested divergences, TV-distance yields the best results in both prediction accuracy and recommendation reward. This highlights that the specific mechanism used to align the answer agent's output distribution with the target distribution is a critical factor in DHRL.

\begin{table}[t!]
\centering
\caption{Ablation: Choice of f-Divergence (Amazon Dataset). Metrics after 1k iterations.}
\label{tab:divergence}
\begin{tabular}{lcc}
\toprule
Divergence & \makecell{Prediction \\ Accuracy} & \makecell{Recommendation \\ Reward} \\
\midrule
TV-distance (Default) & 0.71 & 0.83 \\
$\chi^2$-divergence & 0.42 &  0.61 \\
KL-divergence & 0.4 &  0.58 \\
\bottomrule
\end{tabular}
\end{table}

\section{Related Work}

Learning history representations is crucial for POMDPs \citep{kaelbling1998planning}. PSRs \citep{littman2001predictive} use predictions of future action-observation sequences (``tests''), though as discussed above, rely on manually engineered, often low-level tests. In contrast, DHRs learn representations based on answering high-level, semantically meaningful questions about history and future, shifting the focus
to task-relevant understanding.

Predictive principles have been adapted to deep learning methodologies. Predictive-state decoders \citep{venkatraman2017predictive}, PSRNNs \citep{downey2017predictive}, and transformer-based state predictive pepresentations \citep{liu2024enhancing} predict future observations, features, or latent states. While leveraging prediction, they typically focus on low-level features, in contrast to DHRs, which use a broader, more abstract set of questions (the QA-space) to shape representations, ensuring they answer informative queries beyond standard next-observation prediction.
Recent work has emphasized self-prediction in latent space \citep{ni2024bridging,schlegel2018discovery,schwarzer2020data,guo2020bootstrap}, learning representations by predicting future latent states, values, or observation embeddings. These methods yield latent representations implicitly defined by the prediction objective. DHRs, via the QA framework, explicitly define representation content through their questions, allowing for greater interpretability and control (see also \Cref{appendix: qa generator} on training an adversarial QA generator). DHR sufficiency is tied to question-answering, not just predicting future latents/values.

Other approaches approximate belief states, like the Wasserstein Belief Updater \citep{avalos2023wasserstein}, and often need access to the true state during training. DHRs learn from observation-action history without privileged state access. Methods like HELM \citep{paischer2023semantic} use frozen pre-trained LLMs for history compression. DHRs use a multi-agent framework to learn the representation encoder jointly with answer and decision modules, guided by reward maximization and question-answering accuracy.
Finally, LLMs have been used to generate textual user profiles \citep{hou2024large,zhang2023recommendation,zheng2024harnessing,sabouri2025towards} or personas \citep{ge2024scaling,shi2025personax} to enhance interpretability. DHRs differ as they are directly optimized to serve as sufficient statistics, with information encoded not just by generative summarization, but by their ability to answer task-relevant predictive questions.

\section{Conclusion}

This paper introduced descriptive history representations (DHRs), a novel framework for learning compact, informative history summaries by focusing on their ability to answer task-relevant questions. Our multi-agent learning approach successfully generates interpretable textual user profiles that act as effective DHRs, demonstrating strong predictive accuracy and downstream recommendation performance. DHRs offer a compelling alternative to traditional methods by explicitly guiding representation learning through a QA-space, enhancing interpretability and alignment with high-level goals. Future work includes: dynamically learning an adversarial QA generator (\Cref{appendix: qa generator}); investigating more complex question structures; and applying DHRs to a wider array of partially observable domains like robotics and dialogue systems.

\bibliography{bibliography}
\bibliographystyle{plainnat}

\newpage
\begin{appendix}

\section{Societal Impact Statement}

Descriptive history representations (DHRs), particularly when viewed as interpretable textual user profiles, offer specific societal benefits but also pose certain risks. We believe the benefits outweigh the risks if specific risk-mitigation strategies are employed in the use of DHRs. On the positive side, this approach can significantly enhance the transparency of user models in partially observable environments like recommender systems. By generating textual summaries that explain the basis for decisions, users might gain a clearer understanding of why certain recommendations are made, fostering trust and potentially allowing for more informed interactions. Moreover, DHRs can support a greater degree of user control and agency. The QA-space itself, guiding the DHR to focus on task-relevant information, could lead to more nuanced and genuinely helpful personalization, as the system learns to explicitly answer questions about user preferences and future behavior rather than just correlations with low-level signals. This explicit focus on question-answering for representation learning offers a pathway to models whose internal reasoning is more transparent.

However, the very interpretability and specificity of DHR-generated textual profiles introduce distinct challenges. Such concentrated, human-readable summaries of user history, even if aimed at better decision-making, could inadvertently reveal sensitive information or enable more precise inference of private attributes if not carefully managed, posing additional privacy risks beyond those of opaque embedding-based profiles. The content and biases of these textual profiles are also directly influenced by the choice of questions in the QA-space and the underlying language models used for generation---an improperly designed QA-space or biased LLM could lead to profiles that amplify societal biases or misrepresent users in an understandable but harmful way. Furthermore, while textual profiles aim for clarity, they are still summaries, and the nuances captured or missed are dictated by the QA-space, potentially leading to over-simplification or misinterpretations that could be exploited if the system is designed to generate persuasive, profile-aligned outputs. Responsible development necessitates careful design of the QA-space to ensure it elicits beneficial and fair representations, alongside ongoing scrutiny of the generated textual profiles for unintended consequences.

\newpage
\section{QA Generator (Design and Adversarial Training)}
\label{appendix: qa generator}

The descriptive history representation (DHR) framework relies on a question-answer space (QA-space) to guide the learning of informative representations. A crucial component is the QA generator ($\nu^*_{QA}$), which provides question-answer pairs $(q,y)$ based on the history $h$ and future outcomes $\omega$. In the main paper, we largely assume access to such a generator. This section elaborates on how this was handled in our experiments, discusses the design of sufficient question sets, and proposes a future direction for learning useful QA generators.

\subsection{Experiment Design}
\label{appendix: QA generator experiment design}

Our work primary focuses on learning DHRs and the properties of the resulting representations, rather than on learning the QA generator itself. Consequently, for our experiments, we assume access to a pre-defined QA generator. This is implemented by procedurally generating questions and their ground-truth answers based on observed user histories and future interactions from the datasets. 

Our questions are aimed at capturing salient aspects of user preferences and future behavior, rather than predicting raw, low-level observations. In our experiments, ranking questions include comparing pairs of items based on predicted user preference, such as ``Does the user prefer Item A or Item B?'' The ground-truth answers for these is derived from the user's future ratings of these items. 

For the Amazon dataset, we also include review generation questions, asking to generate a textual review for an item as the user might write it; e.g., ``Write a review this user might provide the following product: <description>." The ground-truth answer here is the actual review written by the user for that item in their future interactions. 

Our questions rely on natural language templates, whose slots are filled with specific items or contexts from the user's future. This approach ensures that questions are not overly specific to individual data points but rather represent generalizable queries about user behavior and preferences, leveraging the richness and flexibility of natural language. This methodology moves beyond attempting to predict the exact sequence of raw future observations (e.g., raw ratings or clicks). Instead, it focuses the representation learning on summarizing history in a way that supports answering these more abstract, semantically meaningful questions, which are directly relevant to the decision-making task, like making good recommendations. Careful design of these questions is crucial for ensuring the DHR learns to encapsulate information relevant to understanding user preferences and predicting their future actions in a structured manner.

\subsection{Designing Sufficient Question Sets}
The choice of questions fundamentally defines the information that the DHR aims to capture. A set of questions $Q^*$ is deemed sufficient if the answers to these questions provide all necessary information for a specific purpose, such as maximizing reward in an RL task. 

Designing a set of sufficient questions is analogous to other critical design choices in RL, such as defining an action space (or a set of options), an abstraction of the state space, or crafting a reward function. In many RL applications, these components are manually engineered based on domain knowledge and the specific goals of the agent, and are often refined iteratively. For instance, in robotics, actions might be joint torques, and rewards might be based on task completion. Similarly, a set of questions for DHRs can be designed by domain experts to probe the most relevant aspects of the environment's state or history. 

A heuristic approach for creating a sufficient question set is to iteratively refine or expand the set of questions based on the agent's performance or by analyzing what information seems to be missing from the DHR. For example, in a dialogue system, initial questions might focus on user intent or sentiment, with later additions querying user knowledge or specific entities. In autonomous driving, questions could range from simple presence detection like ``Is there a pedestrian in the crosswalk?" to more complex predictions like "What is the predicted trajectory of the vehicle ahead?" 

A well-designed set of questions can be highly beneficial. Questions, especially in natural language, make the DHR's content more interpretable, as we can understand what information the representation is trying to encode. It also explicitly directs the representation learning process towards capturing information deemed critical by the designer. Furthermore, compared to learning representations based on predicting all raw future observations---as seen in some PSR or belief state approaches---focusing on a curated set of high-level questions can be more tractable and generalizable, particularly in complex environments with high-dimensional observation spaces. 

While learning the questions themselves is a desirable long-term goal, designing the question set using the strong prior of large language models (LLMs) is a practical and powerful approach, which injects domain knowledge and task relevance into the representation learning process.
This practicality and power stem from the vast world knowledge and semantic understanding embedded within LLMs, acquired during their pre-training on diverse and extensive corpora.
Consequently, an LLM can be prompted to generate candidate questions with domain-specific nuances---for example, recognizing relevant attributes for movies, such as genre, director, or thematic elements; or for products, such as brand, material, or user reviews.

Beyond domain specifics, LLMs possess a strong grasp of task-relevant concepts. For a recommendation task, this includes notions of preference, comparison, a user's potential future behavior, or even the underlying reasons for a choice, enabling the generation of questions that directly probe these critical aspects.
This allows the LLM-guided design process to formulate questions that elicit more abstract and semantically rich answers than merely predicting low-level future observations.
For instance, rather than focusing solely on predicting the next click, questions can probe comparative preferences (e.g., ``Would the user prefer item A over item B given their history?'') or solicit generative summaries of latent preferences (e.g., ``Describe the user's taste profile based on past interactions.'').

The process might involve leveraging an LLM to generate a broad suite of potential questions tailored to the specific domain and task, from which a human designer can then curate, refine, or further specialize the set, significantly augmenting the human designer's capacity to craft a comprehensive and effective question set.

By injecting such structured and high-level priors into the question design phase, the DHR learning process is more directly guided towards capturing the most salient information for effective decision-making and interpretability, aligning the representation with the core objectives of the task.

\subsection{Training a QA Agent}

Our work focuses on the core DHR learning algorithm with a fixed QA-space. A promising future direction is to learn the QA generator itself. One approach is to formulate this as an adversarial learning problem, where a QA agent, or generator, learns to pose questions and provide answers that are maximally informative or challenging for the DHR encoder and answer agent. 

The original optimization problem (\Cref{eq: DHR primal objective}) is designed to to maximize task reward while minimizing the divergence between the DHR's predicted answer distribution and a fixed, ground-truth answer distribution. If we aim to learn the QA generator, specifically the component that provides these ``ground-truth'' or ``target'' answers (and potentially the questions themselves), we can introduce an adversarial objective. 

We refine the objective in \Cref{eq: DHR primal objective} as follows:
\begin{align}
\label{eq: primal adversarial QA}
    \max_{E, \nu_A, \pi_D} 
    \min_{\nu^*_{QA}}
        (1-\lambda) V(\pi) 
        - 
        \lambda 
        \fdiv{d^{\nu^*_A}}{d^{\nu_A}},
\tag{OPT 3}
\end{align}
where the adversarial QA generator $\nu^*_{QA}$ (which generates $(q,y)$ pairs from $h, \omega$) tries to select questions and answers to maximize the divergence $D_f$, making it harder for the DHR $E$ and answer agent $\nu_A$ to match its outputs. The DHR encoder $E$ and answer agent $\nu_A$ continue to try to minimize this divergence while maximizing task reward.

The dual variational form, analogous to \Cref{eq: dual improved}, becomes
\begin{align}
    \label{eq: dual qa generator}
    \max_{\pi_D, E, \nu_A}
    \min_{\substack{g: \gH \times \gQ \times \gY \mapsto \sR \\ \nu_{QA}^*: \gH \times \Omega \mapsto \Delta_{\gQ \times \gY}}} 
    \expect*{}{
        (1 - \lambda) r(h, a) 
        +
        \lambda
        \expect*{\substack{q \sim \nu^*_Q(h, \omega) \\ y \sim \nu_A(q,h,\omega)}}{f^*(g(h,q,y))}
        - 
        \lambda 
        \expect*{\substack{q \sim \nu^*_Q(h, \omega) \\ y \sim \nu^*_{A}(z, q)}}{g(h,q,y)}
    }.
    \tag{OPT 4}
\end{align}
In this formulation, $\nu^*_{QA}$ is the adversarial QA generator that produces $(q,y)$ pairs given history $h$ and future $\omega$. The agent's answer network $\nu_A$ predicts $\hat{y}$ for a question $q$ (generated by $\nu^*_{QA}$) using the DHR $z=E(h)$. The discriminator $g$ tries to distinguish between answers from $\nu_A$ and $\nu^*_{QA}$. The adversarial QA generator $\nu^*_{QA}$ is trained to generate $(q,y)$ pairs that are hard for $\nu_A$ to predict correctly, effectively maximizing the objective for $g$. \Cref{eq: dual qa generator} maintains strong duality, allowing for an iterative training algorithm involving updates to $\pi_D, E, \nu_A, g,$ and now also $\nu^*_{QA}$.

Learning an adversarial QA generator offers potential benefits such as the automated discovery of informative questions, where the system could learn to ask questions most relevant for the task or that expose gaps in the DHR's understanding. It could also facilitate a form of curriculum learning, where the adversarial QA generator might initially pose simpler questions and gradually increase their difficulty as the DHR encoder and answer agent improve. 

Despite the benefits above, ensuring that the learned questions are semantically meaningful and interpretable may be challenging, as without proper regularization or priors, the generator might find trivial or uninformative ways to make the task hard. LLMs serve as powerful backbones for $\nu^*_{QA}$, leveraging their inherent understanding of language and concepts to propose coherent questions. Still, this would require strong regularization, e.g., by encouraging question diversity, penalizing overly complex questions, or biasing towards human-understandable questions, to guide the generator effectively and prevent collapse or trivial solutions. 

Learning an adversarial QA generator is a complex but exciting research direction, which may significantly enhance the adaptability of the DHR framework. Our current work lays the foundation by demonstrating the effectiveness of DHRs given a well-defined QA-space, and future work can build upon this to explore dynamic and learned QA-spaces.

\newpage
\section{Implementation Details}
\label{appendix: implementation details}

This appendix provides additional details regarding the implementation of our DHR learning (DHRL) framework.

\paragraph{Models and Parameterization}
The DHR encoder ($E$), answer agent ($\nu_A$), and decision agent ($\pi_D$) are implemented using LLMs. As mentioned in the main paper (Section 5.1), primary LLM models include Gemini 1.5 Flash and Gemma V3 (4B, 12B). 
The DHR encoder and answer agent share the same base LLM architecture. A fixed anchor LLM (typically the original or supervised fine-tuned version of the model), is used to provide the reference distribution for the KL-divergence regularization term in the policy loss.
The value network, responsible for estimating advantages and providing outputs for the DSR discriminator ($g$), defaults to a Gemma V3 4B model unless specified otherwise.
The discriminator $g$ outputs are derived from the value network. This is achieved by feeding the value network QA pair samples (one reference, one target, as required by DSR). The value network's output vocabulary is conceptually partitioned: one segment is used for standard value prediction, while another segment provides the scalar outputs leveraged by the discriminator logic.

\paragraph{Training Framework and Algorithm}
The core learning algorithm is implemented as an actor-critic method. The DHR encoder acts as the policy, and the answer agent components (value function and discriminator logic) form parts of the critic and learning signal.
Training is run for up to 3000 optimization steps. As noted in the main paper, convergence was generally observed within 1000 iterations, which forms the basis for our reported results.

\paragraph{Key Hyperparameters}
Several hyperparameters governed the training process. These are summarized in Table~\ref{tab:hyperparameters_appendix_c}.

\begin{table}[h!]
\centering
\caption{Key Hyperparameters for DHRL Experiments.}
\label{tab:hyperparameters_appendix_c}
\begin{tabular}{@{}ll@{}}
\toprule
Hyperparameter                     & Default Value  \\ \midrule
DSR Balancing Factor ($\lambda$)   & 0.01  \\
Batch Size                         & 128       \\
LLM Input Token Length             & 6144     \\
User Profile ($z$) Max Token Length & 256 (default)   \\
User History Length ($H$)          & 10 (default)        \\
Number of Ranking Questions ($K_q$) & 5 (default)          \\   
Policy Update Delay                & 20 steps     \\
Policy Warmup Steps                & 20 steps    \\
 \bottomrule
\end{tabular}
\end{table}

\paragraph{Loss Function Details (DSR)}
The choice of $f$-divergence for the discriminator was crucial. The default was TV-distance, as reported in the main paper. KL divergence and Chi-squared divergence were also explored. 
The DHR encoder's policy was regularized using KL divergence w.r.t.\ the fixed anchor (SFT) model, with the KL weight ($\alpha$) annealing. The total loss driving updates included: a policy gradient loss term derived from advantages; a value function loss term; the KL regularization term for the policy; and the discriminator loss term. The advantages were computed based on a sum of the environment reward $r^D$ (related to downstream task performance) and the DSR answer reward $r^A$.

\newpage
\section{Missing Results}
\label{appendix: misssing results}

\Cref{tab:main_results_a_appendix,tab:main_results_b_appendix} provide the complete table of results from \Cref{tab:main_results} with 95\% confidence intervals. We also provide missing results for ablations for Amazon in \Cref{fig:amazon ablations}, showing similar results to \Cref{fig:ablations}. We also show in \Cref{fig:amazon ablations} (left) the learning curves for the ablation over profile lengths. We found that the Amazon dataset in particular is very sensitive to longer profile lengths. Future work can further explore the effects of the DHR length and its relation to the choice of DHR templates with their predictive capabilities.

\begin{table}[t!]
\definecolor{lightgray}{gray}{0.9} 
\centering
\caption{\footnotesize Performance (with 95\% CI) of DHRL with default settings (profile with max. 256 tokens, 5 questions, 10 history interactions, TV-divergence) after 1,000 iterations for Gemma V3 4B, 12B, and Gemini 1.5 Flash models.}
\label{tab:main_results_a_appendix}
\begin{tabular}{lcc}
\toprule
\bf Model \& Method & \bf \makecell{Prediction \\ Accuracy \\ (w.r.t. GT)} & \bf \makecell{Rec. \\ Reward \\ ($\boldsymbol{r^D}$)} \\
\midrule\midrule
\multicolumn{3}{c}{Task: recommendation using Amazon products user profiles} \\
\midrule
Gemma V3 4B & 0.34 $\pm$ 0.04  & 0.54 $\pm$ 0.04  \\
\rowcolor{lightgray} Gemma V3 4B+DHRL  & 0.71 $\pm$ 0.04 & 0.83 $\pm$ 0.03 \\
Gemma V3 12B & 0.67 $\pm$ 0.04  & 0.78 $\pm$ 0.03 \\
\rowcolor{lightgray} Gemma V3 12B+DHRL  & 0.75 $\pm$ 0.04 & 0.84 $\pm$ 0.03 \\
Gemini 1.5 Flash & 0.69 $\pm$ 0.04  & 0.80 $\pm$ 0.03 \\
\rowcolor{lightgray} Gemini 1.5\ Flash+DHRL & 0.74 $\pm$ 0.04  & 0.86 $\pm$ 0.03 \\
\midrule\midrule
\multicolumn{3}{c}{Task: recommendation using MovieLens user profiles} \\
\midrule
Gemma V3 4B & 0.37 $\pm$ 0.04  & 0.61 $\pm$ 0.04 \\
\rowcolor{lightgray} Gemma V3 4B+DHRL & 0.74 $\pm$ 0.04 & 0.79 $\pm$ 0.03 \\
Gemma V3 12B & 0.58 $\pm$ 0.04  & 0.64 $\pm$ 0.04 \\
\rowcolor{lightgray} Gemma V3 12B+DHRL  & 0.84 $\pm$ 0.03 & 0.93 $\pm$ 0.02 \\
Gemini 1.5 Flash & 0.62 $\pm$ 0.04 & 0.68 $\pm$ 0.04 \\
\rowcolor{lightgray} Gemini 1.5\ Flash+DHRL & 0.82 $\pm$ 0.03 & 0.92 $\pm$ 0.02 \\
\bottomrule
\end{tabular}
\end{table}

\begin{table}[t!]
\definecolor{lightgray}{gray}{0.9} 
\centering
\caption{\footnotesize Performance (with 95\% CI) of DHRL with default settings (profile with max. 256 tokens, 5 questions, 10 history interactions, TV-divergence) after 1,000 iterations for Gemma V3 4B, 12B, and Gemini 1.5 Flash models.}
\label{tab:main_results_b_appendix} 
\begin{tabular}{lccc}
\toprule
\bf Model \& Method &  \bf \makecell{Profile-History \\ Consistency \\ (AI/Human)} & \bf \makecell{~ Prediction \\ Fidelity \\ (AI/Human)} & \bf \makecell{Review \\ Quality \\ (AI/Human)} \\
\midrule\midrule
\multicolumn{4}{c}{Task: recommendation using Amazon products user profiles} \\
\midrule
Gemma V3 4B & 3.46 $\pm$ 0.16~/~3.28 $\pm$ 0.16 & 0.44 $\pm$ 0.04~/~0.47 $\pm$ 0.04 & 0.15 $\pm$ 0.03~/~0.27 $\pm$ 0.04\\
\rowcolor{lightgray} Gemma V3 4B+DHRL  & 3.41 $\pm$ 0.16~/~3.85 $\pm$ 0.15 & 0.56 $\pm$ 0.04~/~0.53 $\pm$ 0.04 & 0.85 $\pm$ 0.03~/~0.73 $\pm$ 0.04\\
Gemma V3 12B & 4.32 $\pm$ 0.12~/~4.18 $\pm$ 0.13 & 0.34 $\pm$ 0.04~/~0.39 $\pm$ 0.04 & 0.17 $\pm$ 0.03~/~0.35 $\pm$ 0.04\\
\rowcolor{lightgray} Gemma V3 12B+DHRL   & 4.39 $\pm$ 0.12~/~4.37 $\pm$ 0.12 & 0.66 $\pm$ 0.04~/~0.61 $\pm$ 0.04 & 0.83 $\pm$ 0.03~/~0.65 $\pm$ 0.04\\
Gemini 1.5 Flash  & 4.41 $\pm$ 0.12~/~4.30 $\pm$ 0.13 & 0.38 $\pm$ 0.04~/~0.41 $\pm$ 0.04 & 0.17 $\pm$ 0.03~/~0.38 $\pm$ 0.04\\
\rowcolor{lightgray} Gemini 1.5\ Flash+DHRL  & 4.46 $\pm$ 0.11~/~4.34 $\pm$ 0.12 & 0.62 $\pm$ 0.04~/~0.59 $\pm$ 0.04 & 0.83 $\pm$ 0.03~/~0.62 $\pm$ 0.04\\
\midrule\midrule
\multicolumn{4}{c}{Task: recommendation using MovieLens user profiles} \\
\midrule
Gemma V3 4B & 3.72 $\pm$ 0.15~/~3.96 $\pm$ 0.14 & 0.32 $\pm$ 0.04~/~0.45 $\pm$ 0.04 & -\\
\rowcolor{lightgray} Gemma V3 4B+DHRL & 3.84 $\pm$ 0.15~/~4.24 $\pm$ 0.13 & 0.68 $\pm$ 0.04~/~0.55 $\pm$ 0.04 & -\\
Gemma V3 12B & 4.66 $\pm$ 0.09~/~4.58 $\pm$ 0.10 & 0.35 $\pm$ 0.04~/~0.42 $\pm$ 0.04 & -\\
\rowcolor{lightgray} Gemma V3 12B+DHRL  & 4.62 $\pm$ 0.10~/~4.71 $\pm$ 0.09 & 0.76 $\pm$ 0.04~/~0.66 $\pm$ 0.04 & -\\
Gemini 1.5 Flash & 4.69 $\pm$ 0.09~/~4.53 $\pm$ 0.11 & 0.38 $\pm$ 0.04~/~0.39 $\pm$ 0.04 & -\\
\rowcolor{lightgray} Gemini 1.5\ Flash+DHRL  & 4.74 $\pm$ 0.08~/~4.70 $\pm$ 0.09 & 0.77 $\pm$ 0.03~/~0.62 $\pm$ 0.04 & -\\
\bottomrule
\end{tabular}
\end{table}

\begin{figure*}[t!]
    \centering
    \includegraphics[width=0.375\linewidth]{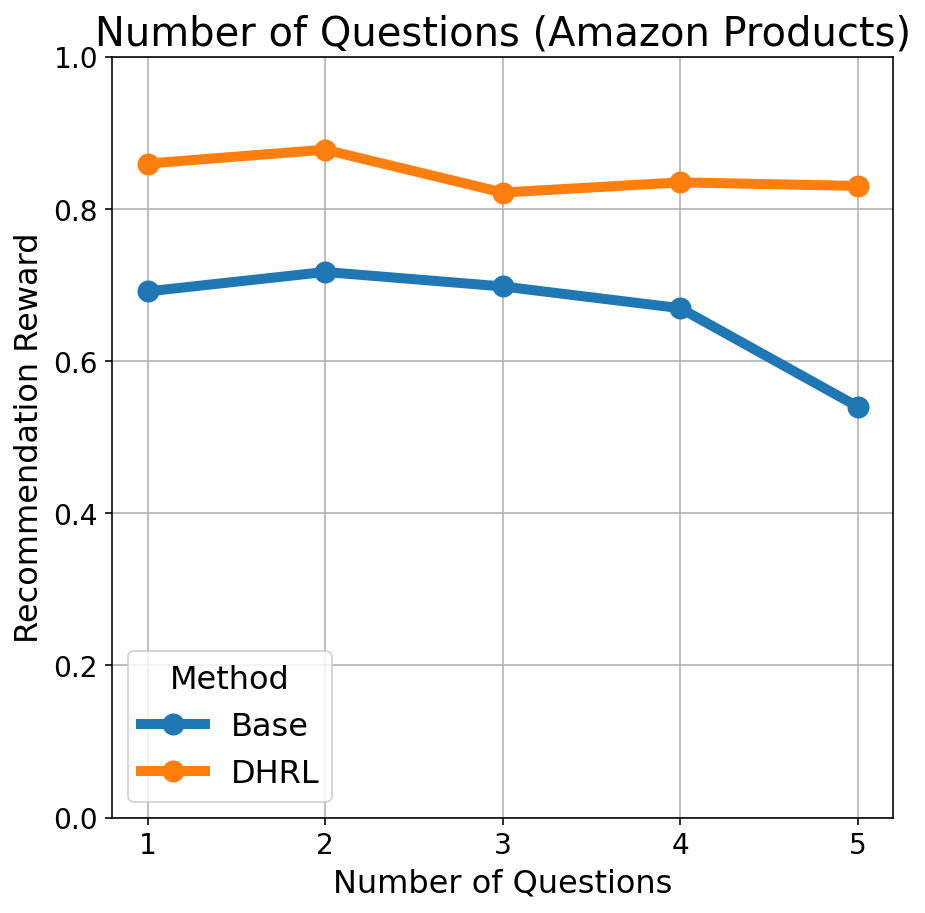}
    \includegraphics[width=0.4\linewidth]{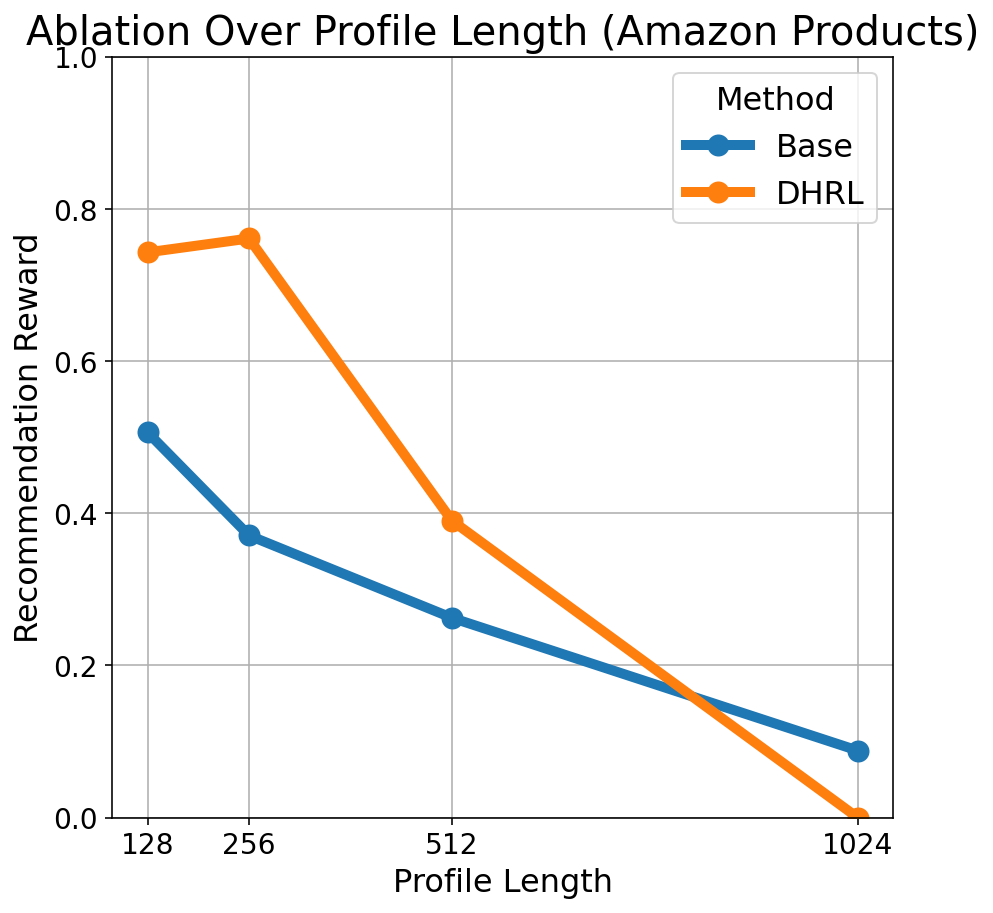}
    \includegraphics[width=0.8\linewidth]{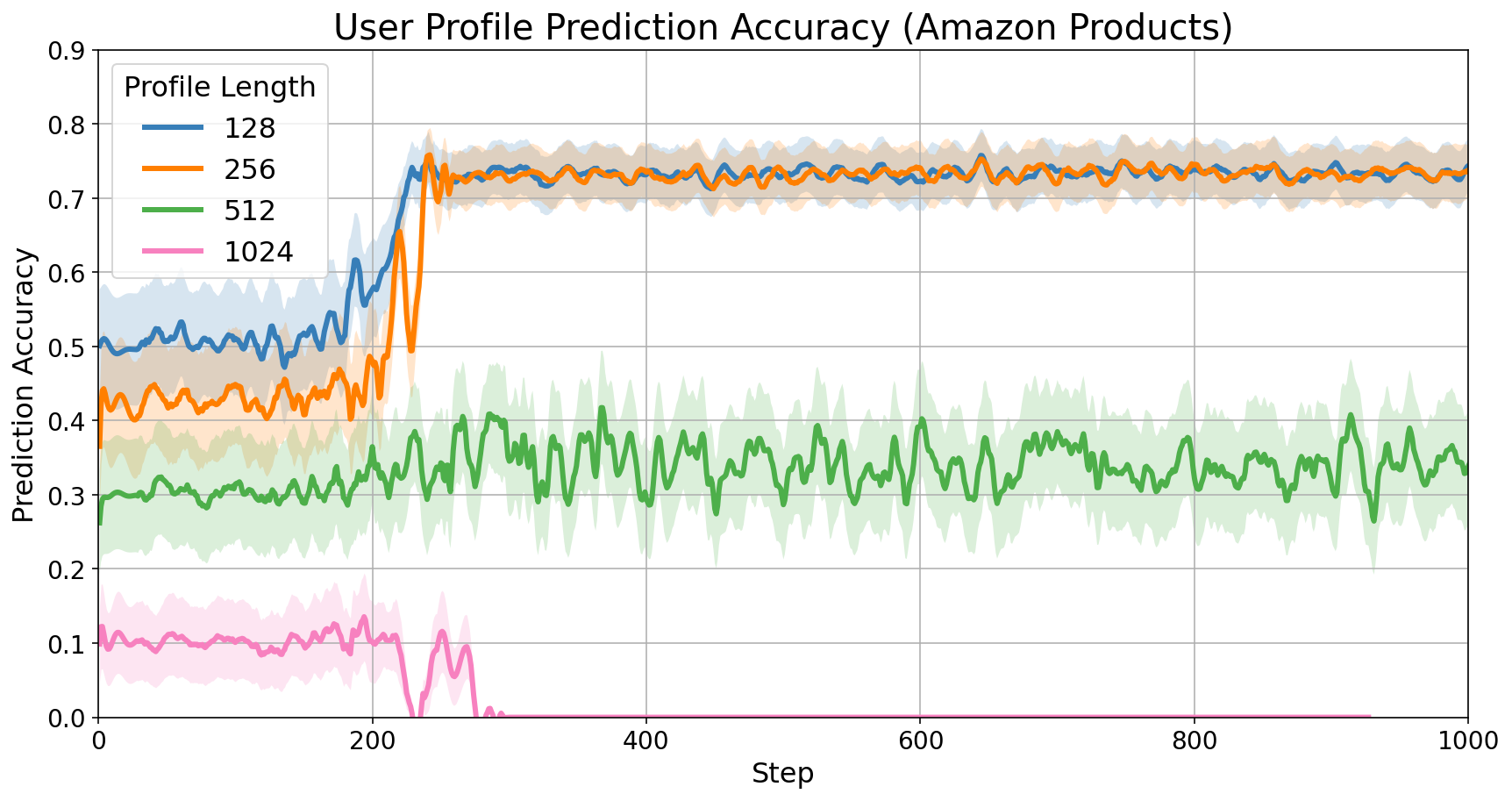}
    \caption{\footnotesize Ablation study on the history length (left) and profile length (right) the Amazon dataset, showing recommendation reward. Bottom plot shows learning curves for different profile lengths. Lower profile lengths were critical to ensure convergence.}
    \label{fig:amazon ablations} 
\end{figure*}

\newpage
\section{Proof of \Cref{proposition: DHR is sufficient}}
\label{appendix: proof of dhr proposition}

\dhrsufficientproposition*

\begin{proof}
Let $E_{DHR}: \mathcal{H} \rightarrow \mathcal{Z}$ be a Descriptive History Representation.
By the definition of an $f$-sufficient QA-space, for any history $h \in \gH$, there is a set of sufficient questions $Q^*_h \subseteq \gQ$. Denote the set of sufficient question-answer pairs 
\begin{align*}
S_{QA}(h) := \{(q, \nu(h,q)) \mid q \in Q^*_h\}.
\end{align*}
By \Cref{def: DHR}, for any $h \in \mathcal{H}$ and any sufficient question $q \in Q^*_h$:
\begin{align*}
\nu_A(E_{DHR}(h), q) = \nu(h,q).
\end{align*}
This implies we can write $S_{QA}(h) = \brk[c]*{ (q, \nu_A(E_{DHR}(h),q)) \mid q \in Q^*_h }$.

By the definition of a sufficient QA space, there exists a function $F^\pi: (\gQ \times \gY)^* \rightarrow \mathbb{R}$ such that 
\begin{align}
\label{eq: sufficient qa space}
f^\pi(h) = F^\pi(S_{QA}(h)).
\end{align}

To show $E_{DHR}$ is an $f$-sufficient statistic, we must define $g^\pi : \mathcal{Z} \rightarrow \mathbb{R}$ such that 
\begin{align}
\label{eq: to show}
f^\pi(h) = g^\pi(E_{DHR}(h)).
\end{align}

Let $z \in \mathcal{Z}$. Define the set $S_{constr}(z)$ of QA pairs constructible from $z$ as:
\begin{align*}
S_{constr}(z) := \brk[c]*{ (q, \nu_A(z,q)) \mid \exists h' \text{ s.t. } z = E_{DHR}(h') \land q \in Q^*_{h'} }
\end{align*}
The set $S_{constr}(z)$ includes all question-answer pairs where the question $q$ is sufficient for some history $h'$ that maps to $z$, and the answer is generated by $\nu_A$ using $z$.
Define $g^\pi(z) := F^\pi(S_{constr}(z))$.

Now, consider a specific history $h$, and let $z_h = E_{DHR}(h)$. We need to show $g^\pi(z_h) = f^\pi(h)$. By definition, we have that

\begin{align}
\label{eq: connecting g and F}
g^\pi(E_{DHR}(h)) = g^\pi(z_h) = F^\pi(S_{constr}(z_h)).
\end{align}

By \Cref{eq: sufficient qa space}, $f^\pi(h) = F^\pi(S_{QA}(h))$. Thus, to prove \Cref{eq: to show}, given the equality in \Cref{eq: connecting g and F}, it is left to show $F^\pi(S_{constr}(z_h)) = F^\pi(S_{QA}(h))$.

To establish $F^\pi(S_{constr}(z_h)) = F^\pi(S_{QA}(h))$, we will show that $S_{constr}(z_h) = S_{QA}(h)$.

Denote the question sets corresponding to $S_{constr}(z_h)$ and $S_{QA}(h)$ by
\begin{align*}
K_{QA} = Q^*_h, \qquad
K_{constr} = \brk[c]*{ q \mid \exists h' \text{ s.t. } z_h = E_{DHR}(h') \land q \in Q^*_{h'} }
\end{align*}
To show that $S_{constr}(z_h) = S_{QA}(h)$, it is enough to show $K_{QA} = K_{constr}$ (since $E_{DHR}$ is a DHR and thus by \Cref{def: DHR} the answer functions are equivalent, i.e., $\nu(h,q) = \nu_A(E_{DHR}(h), q)$).

\textbf{Case 1:  $K_{QA} \subseteq K_{constr}.$}
    Let $q_0 \in K_{QA}$. Then $q_0 \in Q^*_h$.
    We can choose $h' = h$. Since $z_h = E_{DHR}(h)$ by definition of $z_h$, the condition $z_h = E_{DHR}(h')$ is satisfied.
    Also, $q_0 \in Q^*_{h'}$ because $h'=h$.
    Thus, $q_0 \in K_{constr}$. So, $K_{QA} \subseteq K_{constr}$.

\textbf{Case 2:  $K_{constr} \subseteq K_{QA}.$}
    Let $q_1 \in K_{constr}$. Then there exists some history $h_s$ such that $z_h = E_{DHR}(h_s)$ and $q_1 \in Q^*_{h_s}$.
    We need to show that $q_1 \in Q^*_h$.
    Notice that, if multiple histories $h^1, h^2, \dots$ map to the same representation $z = E_{DHR}(h^1) = E_{DHR}(h^2) = \dots$, then the set of sufficient questions is in itself a function of $z$. That is, $Q^*_h = G(E_{DHR}(h))$ for some function $G:\gZ \mapsto \gQ$ mapping representations to sets of questions.
    If this holds, then for our specific history $h$, $Q^*_h = G(E_{DHR}(h)) = G(z_h)$. So $K_{QA} = G(z_h)$.
    For $q_1 \in K_{constr}$, we have $E_{DHR}(h_s) = z_h$ and $q_1 \in Q^*_{h_s}$. Under the interpretation $Q^*_{h_s} = G(E_{DHR}(h_s))$, we have $q_1 \in G(z_h)$.
    Thus, $q_1 \in K_{QA}$. This shows $K_{constr} \subseteq K_{QA}$.

Since $K_{QA} \subseteq K_{constr}$ and $K_{constr} \subseteq K_{QA}$, we have $K_{QA} = K_{constr}$. This completes our proof.
\end{proof}

\newpage
\section{LLM Prompts}
\label{appendix: agent prompts}

Below we provide the complete prompts used in DHRL. The prompts are provided for the Amazon domain. The prompt used for Movielens is identical except for removal the review question.

Text in {\color{blue}blue} refers to parts of the prompt that are not changed. Text in {\color{red}red} refers to placeholders that are replaced with e.g., titles, descriptions, user profiles, etc.

\subsection{DHR Encoder}

{
\color{blue}
Your task is to write a user profile in the shopping domain for Clothing, Shoes, and Jewelry. The user profile should describe the user's preferences well. \\
Your user profile will later be used to predict the user's preferences and reviews over new items.\\
--------------------------------------------------\\
You will be given a list of products the user has rated (ratings 1 to 5, where 5 is the highest rating).\\
You will also be given reviews for some of these products.\\
You will then be asked to write a user profile describing the preferences of the user.\\
--------------------------------------------------\\
--------------------------------------------------\\
Here is an example of a user profile given a history:\\
\#BEGIN EXAMPLE
~\\~\\        
  User History: \\
    Item \#1 \\
    Item title: Robert Graham Men's Shipwreck Long Sleeve Button Down Shirt \\
    Item description: N/A \\
    Item price: N/A \\
    User rating: 5.0 \\
    User review: Fantastic shirt.  Great price.  I saw the same shirt at Nordstrom for 229.00
 ~\\~\\   
--------------------------------------------------
~\\~\\
    Item \#2 \\
    Item title: Robert Graham Men's Whitehorse Long Sleeve Button Down Shirt \\
    Item description: "The geometric pattern on this dress shirt creates an optical illusion to enhance your wardrobe. Crafted from Egyptian cotton for comfort.", "The geometric pattern on this dress shirt creates an optical illusion to enhance your wardrobe. Crafted from Egyptian cotton for comfort." \\
    Item price: N/A \\
    User rating: 5.0 \\
    User review: Thank you for such a great deal on a great shirt
  ~\\~\\  
--------------------------------------------------
~\\~\\
    Item \#3 \\
    Item title: Robert Graham Men's Shawn-Long Sleeve Woven Shirt \\
    Item description: "When is a polo not just a polo? When it's made from exceptional mercerized pique cotton and boasting our signature space dyed tipping at the collar to take it from ordinary to extraordinary." \\
    Item price: N/A \\
    User rating: 5.0 \\
    User review: I have never had so many complements on a shirt. \\
Thank you for the great deal
  ~\\~\\  
--------------------------------------------------
~\\~\\
    Item \#4 \\
    Item title: Robert Graham Men's Redzone Pixel Patterned Button-Front Shirt with Convertible Cuffs \\
    Item description: N/A \\
    Item price: N/A \\
    User rating: 5.0 \\
    User review: love it
   ~\\~\\
--------------------------------------------------
~\\~\\
    Item \#5 \\
    Item title: Robert Graham Men's Shipwreck-Long Sleeve Button Down Shirt \\
    Item description: "Every stylish guy needs a sophisticated striped shirt in his arsenal. Crafted from premium Egyptian cotton with space dyed stripes and a pop of paisley embroidery at the cuffs and inside placket for a little extra edge, this shirt has you covered at the office or a weekend away" \\
    Item price: N/A \\
    User rating: 5.0 \\
    User review: love it
  ~\\~\\  
--------------------------------------------------
~\\~\\
    Item \#6 \\
    Item title: Robert Graham Men's Maitai-Long Sleeve Button Down Shirt \\
    Item description: "Rain check. No way. Pack a mai tai and check's in for sunny days. Ombre squares of crisp bright white, ocean blues and poolside purples along with a dash of sunny blue sky over embroidery are the perfect cocktail for any getaway" \\
    Item price: N/A \\
    User rating: 5.0 \\
    User review: love it
  ~\\~\\  
--------------------------------------------------
~\\~\\
    Item \#7 \\
    Item title: Lucky 21 Women's Plus Size Cuffed Sleeve One Button Boyfriend Blazer \\
    Item description: "This boyfriend blazer is perfect for office and work wear, but also for a night out with friends! It features one button and cuffed sleeves." \\
    Item Price: \$8.00 - \$34.99 \\
    User rating: 1.0 \\
    User review: sent back
   ~\\~\\ 
--------------------------------------------------
~\\~\\
    Item \#8 \\
    Item title: Robert Graham Men's Seattle Long Sleeve Button Down Shirt \\
    Item description: "Hombre fade window pane print with satin stitch window pane edging and floral embroidery woven. Has digital printed neck band, back yoke and under collar." \\
    Item price: N/A \\
    User rating: 5.0 \\
    User review: Great shirt.  Robert graham is the king
  ~\\~\\  
--------------------------------------------------
~\\~\\
    Item \#9 \\
    Item title: EleQueen 925 Sterling Silver Full Cubic Zirconia Bridal Flower Stud Earrings 15mm \\
    Item description: N/A \\
    Item Price: \$16.99 \\
    User rating: 5.0 \\
    User review: The wife likes
  ~\\~\\  
--------------------------------------------------
~\\~\\
    Item \#10 \\
    Item title: Leveret Women's Robe Soft Micro Fleece Plush Shawl Collar Bathrobe Robe (Size Small-XXLarge) \\
    Item description: N/A \\
    Item Price: \$39.99 - \$49.99 \\
    User rating: 5.0 \\
    User review: Wife likes
   ~\\~\\ 
   ~\\~\\ 
    User Profile:
  ~\\~\\  
    \#PROFILE\#
\begin{enumerate}
    \item  Brand Loyalty:
    \begin{itemize}
    \item   Extremely High Loyalty to Robert Graham: The vast majority (7 out of 10) of rated items are from this brand.  All Robert Graham items are rated 5/5. This is the single strongest signal in the data. The user explicitly calls Robert Graham "king".
    \item   Open to Other Brands, but with Caution:  Purchased items from Lucky 21 (blazer, rated poorly), EleQueen (earrings, positive), and Leveret (robe, positive). This suggests a willingness to explore, but success outside of Robert Graham is mixed.
    \end{itemize}
\item  Product Category Preferences:
\begin{itemize}
        \item   Men's Shirts (Dominant):  Specifically, long-sleeve button-down shirts are the primary interest.  Focus is on dress shirts and slightly more casual shirts (like a refined polo).
        \item   Women's Gifts/Items (Secondary): Purchases include a women's blazer, earrings, and a robe.  Reviews indicate these are likely gifts ("Wife likes").  The low rating on the blazer suggests a different preference set for women's clothing, or simply a bad experience with that specific item.
        \item    No interest shown in shoes, and lower interest in jewerly.
\end{itemize}
\item  Style Preferences (Men's Shirts):
\begin{itemize}
        \item   Bold Patterns and Colors:  Descriptions of favored items include "geometric pattern," "ombre squares," "pixel patterned," "striped," "floral embroidery."  This user is not afraid of visually interesting shirts.
        \item   High-Quality Materials: Mentions of "Egyptian cotton," "mercerized pique cotton," "satin stitch," and "925 Sterling Silver" (for the earrings) indicate a preference for premium materials.
        \item   Details and Accents:  Appreciates features like "paisley embroidery," "space-dyed tipping," "convertible cuffs," and "digital printed neckband." These small details are important.
        \item Business Casual, elevated.
\end{itemize}
\item  Style Preferences (Gifts/Women's Items):
\begin{itemize}
        \item   Classic/Elegant: The earrings are described as "Bridal Flower Stud" with Cubic Zirconia, suggesting a classic, somewhat formal style. The robe is a "Soft Micro Fleece Plush Shawl Collar," indicating a preference for comfort and a traditional design.
        \item   Less information than men's.
\end{itemize}
\item  Price Sensitivity:
\begin{itemize}
        \item   Value-Oriented, but Willing to Pay for Quality:  The user repeatedly mentions "great deal" and compares the price favorably to Nordstrom's. This suggests a search for good value, but not necessarily the absolute lowest price. They are willing to pay for a Robert Graham shirt, implying a higher price bracket acceptance for favored brands.
        \item   Wide Price Range: Has purchased items ranging from under \$10 (potentially the blazer, depending on the specific price within the range) to likely over \$100 (based on the Nordstrom comparison for Robert Graham shirts).
\end{itemize}
\item  Review Style:
\begin{itemize}
        \item   Concise and Positive (when satisfied): Uses short phrases like "love it," "Fantastic shirt," "Great shirt."
        \item   Value-Focused: Often mentions price and deals in positive reviews.
        \item   Direct (when dissatisfied):  Simply states "sent back" for the negative review.
        \item   Expressive: show appreciation and gratefulness.
\end{itemize}
\item Purchase Channel:
        The purchase channel is not described.
\item Location/Region:
        No address, etc. is listed.
\item  Purchase Frequency:
        Cannot be determined from this data alone. More history would be needed.
\item. Inferred Gender and Demographic
        Most likely Male, purchasing gifts for female.
\end{enumerate}
\#END\#
~\\~\\
\#END EXAMPLE \\
-------------------------------------------------- \\
-------------------------------------------------- \\
The above is just an example, you do not have to follow this template exactly. \\
You can try different ways to describe the user's preferences, and you can also be more comprehensive in your description, if needed. \\
-------------------------------------------------- \\
Now it's your turn.     \\
Below is a list of items the user has rated: \\
 1. Title: {\color{red}\textbf{<item 1 title>}}\\
Description: {\color{red}\textbf{<item 1 description>}}\\
Price: {\color{red}\textbf{<item 1 price>}} \\
Review: {\color{red}\textbf{<item 1 review>}}
~\\~\\
2. Title: {\color{red}\textbf{<item 2 title>}}\\
Description: {\color{red}\textbf{<item 2 description>}}\\
Price: {\color{red}\textbf{<item 2 price>}} \\
Review: {\color{red}\textbf{<item 2 review>}}
~\\~\\
3. Title: {\color{red}\textbf{<item 3 title>}}\\
Description: {\color{red}\textbf{<item 3 description>}}\\
Price: {\color{red}\textbf{<item 3 price>}} \\
Review: {\color{red}\textbf{<item 3 review>}}
~\\~\\
4. Title: {\color{red}\textbf{<item 4 title>}}\\
Description: {\color{red}\textbf{<item 4 description>}}\\
Price: {\color{red}\textbf{<item 4 price>}} \\
Review: {\color{red}\textbf{<item 4 review>}}
~\\~\\
5. Title: {\color{red}\textbf{<item 5 title>}}\\
Description: {\color{red}\textbf{<item 5 description>}}\\
Price: {\color{red}\textbf{<item 5 price>}} \\
Review: {\color{red}\textbf{<item 5 review>}}
~\\~\\
6. Title: {\color{red}\textbf{<item 6 title>}}\\
Description: {\color{red}\textbf{<item 6 description>}}\\
Price: {\color{red}\textbf{<item 6 price>}} \\
Review: {\color{red}\textbf{<item 6 review>}}
~\\~\\
7. Title: {\color{red}\textbf{<item 7 title>}}\\
Description: {\color{red}\textbf{<item 7 description>}}\\
Price: {\color{red}\textbf{<item 7 price>}} \\
Review: {\color{red}\textbf{<item 7 review>}}
~\\~\\
8. Title: {\color{red}\textbf{<item 8 title>}}\\
Description: {\color{red}\textbf{<item 8 description>}}\\
Price: {\color{red}\textbf{<item 8 price>}} \\
Review: {\color{red}\textbf{<item 8 review>}}
~\\~\\
9. Title: {\color{red}\textbf{<item 9 title>}}\\
Description: {\color{red}\textbf{<item 9 description>}}\\
Price: {\color{red}\textbf{<item 9 price>}} \\
Review: {\color{red}\textbf{<item 9 review>}}
~\\~\\
10. Title: {\color{red}\textbf{<item 10 title>}}\\
Description: {\color{red}\textbf{<item 10 description>}}\\
Price: {\color{red}\textbf{<item 10 price>}} \\
Review: {\color{red}\textbf{<item 10 review>}} \\
--------------------------------------------------
   ~\\~\\   
-------------------------------------------------- \\
-------------------------------------------------- \\
Write a user profile to describe this user. \\
Ignore information that seems irrelevant or not informative. \\
Limit the profile to a maximum of {\color{red}166} words. \\
Even though the length of the example profile above might be different, there is a strict limit of {\color{red}166} words to your output.
~\\~\\
Output should be in the following format: \\
\#PROFILE\# <user\_profile> \#END\#
}

\subsection{Answer Agent}

{
\color{blue}
Below is a profile of a user's preferences:
~\\~\\
{\color{red}\textbf{<user profile (from DHR encoder)>}}
~\\~\\
--------------------------------------------------\\
Below are 6 items the user hasn't rated yet:
~\\~\\
1. Title: {\color{red}{\color{red}\textbf{<item 1 title>}}}\\
Description: {\color{red}{\color{red}\textbf{<item 1 description>}}}\\
Price: {\color{red}{\color{red}\textbf{<item 1 price>}}}
~\\~\\
2. Title: {\color{red}\textbf{<item 2 title>}}\\
Description: {\color{red}\textbf{<item 2 description>}}\\
Price: {\color{red}\textbf{<item 2 price>}}
~\\~\\
3. Title: {\color{red}\textbf{<item 3 title>}}\\
Description: {\color{red}\textbf{<item 3 description>}}\\
Price: {\color{red}\textbf{<item 3 price>}}
~\\~\\
4. Title: {\color{red}\textbf{<item 4 title>}}\\
Description: {\color{red}\textbf{<item 4 description>}}\\
Price: {\color{red}\textbf{<item 4 price>}}
~\\~\\
5. Title: {\color{red}\textbf{<item 5 title>}}\\
Description: {\color{red}\textbf{<item 5 description>}}\\
Price: {\color{red}\textbf{<item 5 price>}}
~\\~\\
6. Title: {\color{red}\textbf{<item 6 title>}}\\
Description: {\color{red}\textbf{<item 6 description>}}\\
Price: {\color{red}\textbf{<item 6 price>}}\\
--------------------------------------------------
~\\~\\
--------------------------------------------------\\
Below are a set of 6 questions about the items above:\\
(Q1) Rank the items {\color{red}\textbf{<id 5>}} and {\color{red}\textbf{<id 6>}} based on the user's preferences.\\
(Q2) Rank the items {\color{red}\textbf{<id 6>}} and {\color{red}\textbf{<id 2>}} based on the user's preferences.\\
(Q3) Rank the items {\color{red}\textbf{<id 2>}} and {\color{red}\textbf{<id 1>}} based on the user's preferences.\\
(Q4) Rank the items {\color{red}\textbf{<id 1>}} and {\color{red}\textbf{<id 3>}} based on the user's preferences.\\
(Q5) Rank the items {\color{red}\textbf{<id 3>}} and {\color{red}\textbf{<id 4>}} based on the user's preferences.\\
(Q6) Write a review for item {\color{red}\textbf{<id 1>}} as the user would write it.\\
--------------------------------------------------
~\\~\\          
Each of the 6 questions either asks you to rank two items based on the user's preferences, or to write a review for an item in the way the user might write it.\\
For ranking questions, a higher rank means the user would rate the item higher.\\
If there's a tie, pick the order randomly.\\
For review questions, you should write a raw review for the item as the user would write it.
~\\~\\
Formatting your answer:\\
Your output should be in the following format. For each ranking question you should output a line with its prediction.\\
For a review question, you should output a review as the user would write it for that item.
~\\~\\
For question k your output answer should be in the following format:\\
(Ak) \#PREDICTION\# [item\_id, item\_id] \#END\# \\
if it's a ranking quesiton, and \\
(Ak) \#PREDICTION\# user review \#END\# \\
if it's a review question.
~\\~\\
For the ranking question, the list is in order of the user's preferences for those item ids in that question. \\
For the review question, don't prefix it with anything else (like "User review"). Just write the raw review the user would write between the \#PREDICTION\# and \#END\#.
~\\~\\
For example, assume you are asked in question 1 to rank item ids 1 and 2, and you believe the user would rate item 1 higher than item 2, and in question 2 you're asked rank item ids 1, 3, where you the think the user would rate item 3 higher than item 1.
And assume in question 3 you are asked to write a review for item id 1, and let's say the user liked that item. \\
Then your complete output should be: \\
(A1) \#PREDICTION\# [1, 2] \#END\# \\
(A2) \#PREDICTION\# [3, 1] \#END\# \\
(A3) \#PREDICTION\# This shirt is great! I loved the color and the material. \#END\#
~\\~\\
Output should be exactly in the above format. Do not output anything else. \\
Remember to use \#PREDICTION\# and \#END\# for every answer.
}

\newpage
\section{Rater Study}
\label{appendix: rater study}

Below we show examples of rater forms including the contexts and questions:
\begin{mybox}{Sample Rater From for Prediction and Review Accuracy using Amazon Reviews}
\tiny
We have the following summary about preference of a user:
~\\~\\
This user favors comfortable, stylish women's clothing, particularly sandals and dresses. They appreciate value and are drawn to visually appealing designs, as evidenced by the high rating on the silver chain. A preference for easy-to-wear items is clear, with ratings of 4.0 on the sandals and dress. They seem to like simple, classic styles and are happy to purchase items within a reasonable price range. The 5/5 rating on the flip flops indicates a practical focus on comfort and style.
~\\~\\
1. Title: BIADANI Women Button Down Long Sleeve Basic Soft Knit Cardigan Sweater
Item Item description: "Casual, Elegant Fitted Knit Cardigan Sweaters That is Versatile and Made Out of High Quality Material and Luciously Soft."
~\\~\\
Price: \$10.01 - \$25.80
~\\~\\
--------------------------------------------------
~\\~\\
2. Title: Moxeay Women Sexy Sleeveless Spaghetti Strap Mini Club Dress
~\\~\\
Item Item description: "Material: Polyester (soft fabric make you comfortable to wear) Style: Spaghetti strap mini dress Features: U neck/V neck, Sleeveless, Spaghetti strap,Backless, High Waist, Back zipper Dress length: Mini length Occasion: Casual, Beach, Club, Prom, Banquet, Party Evening Attractive style and beautiful design, sexy summer sleeveless dress Ladies Dress ONLY, other accessories photographed not included. U neck style size: S( US 0) Bust:25.19\" Waist:24.40\" Length:23.62\" M(US 2) Bust:26.77\" Waist:25.98\" Length:24.01\" L(US 4) Bust:28.34\" Waist:27.55\" Length:24.40\" XL(US 6) Bust:31.49\" Waist:29.92\" Length:24.80\" Lace V neck style size: S:Bust 32.28\",Waist 27.55\", Length 30.31\" M:Bust 33.85\", Waist 29.13\", Length 30.70\" L:Bust 35.43\", Waist 30.70\", Length 31.10\" XL:Bust 36.22\", Waist 31.49\", Length 31.49\" Embroidery V neck style: S:Waist 29.92'',Bust 34.64'',Length 36.22'' M:Waist 31.49'',Bust 37.00'',Length 36.61'' L:Waist 33.07'',Bust 39.37'',Length 37.00'' XL:Waist 33.85'',Bust 41.73'',Length 37.40'' NOTE: 1.Size is Asian sizes,pls allow 1-2 inch size deviation due to manual measurement. 2.Colors may slightly different due to the lighting and monitor. Package Content: 1 x Women dress (Packed in Moxeay designed outpacking!)"
~\\~\\
Price: \$14.99 - \$21.99
~\\~\\
--------------------------------------------------
~\\~\\
3. Title: Miusol Women's Casual Flare Floral Contrast Evening Party Mini Dress
~\\~\\
Item description: unknown
~\\~\\
Price: unknown
~\\~\\
--------------------------------------------------
~\\~\\
4. Title: Damask Embossed Metal Business Card Case
~\\~\\
Item Item description: "This business card case features an embossed gold damask pattern on both sides. The case holds up to 10 business cards securely and snaps shut when not in use. Measures 3.75\" x 2.5\" x 0.25\"."
~\\~\\
Price: \$3.95
~\\~\\
--------------------------------------------------
~\\~\\
5. Title: Honolulu Jewelry Company Sterling Silver 1mm Box Chain Necklace, 14\" - 36\
~\\~\\
Item Item description: "Nickel free sterling silver 1mm box chain. Comes in different sizes with a spring clasp. Rhodium finished to prevent tarnishing. Made in Italy. Gift box included. From Honolulu Jewelry Company, Honolulu, Hawaii."
~\\~\\
Price: \$8.99
~\\~\\
--------------------------------------------------
~\\~\\
6. Title: Milumia Women's Button up Split Floral Print Flowy Party Maxi Dress
~\\~\\
Item description: unknown
~\\~\\
Price: \$18.99 - \$35.99
~\\~\\
--------------------------------------------------
~\\~\\
(Q1) Rank the items id 2 and id 3 based on the user's preferences.
~\\~\\
(Q2) Rank the items id 3 and id 6 based on the user's preferences.
~\\~\\
(Q3) Rank the items id 6 and id 5 based on the user's preferences.
~\\~\\
(Q4) Rank the items id 5 and id 4 based on the user's preferences.
~\\~\\
(Q5) Rank the items id 4 and id 1 based on the user's preferences.
~\\~\\
(Q6) Pick a review for item id 1 as the user would write it:
~\\~\\
REVIEW1:  Love,love,love the color of this cardi. Will be buying more from this seller. I also love the button detail on the sleeve/cuff. Would highly recommend!
~\\~\\
REVIEW2: This cardigan is so soft and comfy! It’s a great basic piece that goes with everything. The fit is perfect, and it’s really easy to throw on. I got it for a steal – it’s definitely worth the price.
\end{mybox}

\begin{mybox}{Sample Rater Form for Profile Consistency using Amazon Reviews}
\tiny
Given a user with the following purchase history:
~\\~\\
1. Title: Zumba Carpet Gliders for Shoes
~\\
Item Item description: "Dont get stuck on the carpet floor.Zumba Carpet Glidersallows you to step, shake, swivel and spin on carpet with ease and reduced risk of injury."
~\\
Price: unknown
~\\
User Rating: 1.0
~\\~\\
2. Title: Sterling Silver Leverback Earrings Black Pear Teardrop Made with Swarovski Crystals
~\\
Item Item description: "Black faceted crystal teardrops hang from sterling silver rings and leverback earwires. Solid 925 sterling silver, teardrops are approx 5/8 x 3/8 inches. Made with Swarovski Crystals. Gift box or organza bag included, color or style may vary. See Joyful Creations store for matching necklace."
~\\
Price: \$17.99
~\\
User Rating: 4.0
~\\~\\
3. Title: Wiipu fashion vintage luxurious pink color crystal brand designer statement women necklace(B379)
~\\
Item Item description: "Material:alloy rhinestone ,crystal Size: necklace ribbon chain is 35cm ,pendant is 17cm*12cm; shipping from China, usually take about 7--15days arrival,if you not accetp please don't order, thanks!"
~\\
Price: unknown
~\\
User Rating: 4.0
~\\~\\
4. Title: Kooljewelry Sterling Silver Bead and Diamond-Cut Ball Station Necklace (14, 16, 18, 20, 22, 24, 30, or 36 inch)
~\\
Item description: unknown
~\\
Price: \$25.99
~\\
User Rating: 5.0
~\\~\\
5. Title: Dearfoams Women's Sequin Flat Slipper
~\\
Item Item description: "Dearfoams is a slipper brand with great awareness. They understand exactly what customers are looking for-- from pampering your feet in cozy softness to keeping you one step ahead of the game."
~\\
Price: unknown
~\\~\\
User Rating: 4.0
~\\~\\
6. Title: Women's Chiffon Beachwear Dress Swimwear Bikini Cover-up Made in The USA
~\\
Item Item description: "SHORE TRENDZ quality constructed cover-up Chiffon dresses are created with lovely detail. Buy with confidence as they are MADE IN THE USA. These sexy Chiffon dresses have finished edges and lovely pattern detail! We appreciate you visiting and welcome you to check out our store at SHORE TRENDZ for more great items!!!"
~\\
Price: \$11.99
~\\
User Rating: 1.0
~\\~\\
7. Title: Dearfoams Women's Lurex Sweater Knit Ballerina Slipper
~\\
Item Item description: "This slipper features a cable knit upper with silver lurex, yarn pom embellishment with silver lurex, and elasticized throat line for secure fit. Brushed terry lining and insole, 10mm high density poly foam insole, with durable, skid resistant, TPR outsole.", "Dearfoams is a slipper brand with great awareness. They understand exactly what customers are looking for-- from pampering your feet in cozy softness to keeping you one step ahead of the game."
~\\
Price: unknown
~\\
User Rating: 3.0
~\\~\\
8. Title: Nine West Women's Able Synthetic Platform Pump
~\\
Item Item description: "Nine West offers a quick edit of the runways -- pinpointing the must have looks of the season, and translating what is fun, hip, and of the moment. It is trend-right footwear that you will reach for in your closet again and again. Nine West is sure to be your trusted resource for everyday chic style."
~\\
Price: \$29.99 - \$68.98
~\\
User Rating: 5.0
~\\~\\
9. Title: Annie Shoes Women's Devine Dress Pump
~\\
Item description: unknown
~\\
Price: unknown
~\\
User Rating: 4.0
~\\~\\
10. Title: TinkSky Wedding Tiara Rhinestones Crystal Bridal Headband Pageant Princess Crown
~\\
Item description: unknown
~\\
Price: \$8.99
~\\
User Rating: 3.0
~\\~\\
We want to use the following summary to capture user preference from the above purchase history:
\begin{enumerate}
\item \textbf{Brand Affinity:} Shows a strong preference for Nine West and Annie Shoes, evidenced by the 5-star ratings. A secondary interest in Sterling Silver jewelry.

\item \textbf{Style:} Appreciates fashionable slippers and pumps; likely enjoys comfortable yet stylish footwear. 

\item \textbf{Price Range:} Primarily purchases items in the $10 - $50 range.

\item \textbf{Negative Feedback:} The low rating of the beachwear dress and the tiara headband suggests a critical eye towards embellishments and potentially lower quality.

\item \textbf{Purchase type} Most likely female.
\end{enumerate}
Does the above summary faithfully capture user preference from their purchase history?
~\\~\\
1 - definite no \\
2 \\
3 \\
4 \\
5 - definite yes
\end{mybox}

\begin{mybox}{Sample Rater Form for Prediction Accuracy using MovieLens}
\tiny
We have the following summary about preference of a user:
~\\~\\
This user demonstrates a preference for comedies with a strong comedic impact, leaning towards the more witty and absurdist side.  They enjoy action and adventure elements woven into their comedic entertainment, showing an appreciation for thrill and humor combined.  While the user appreciates lightheartedness, they are not averse to a more serious tone - the presence of a thriller and a few dramas suggests a willingness to explore different genres, although they seem to lean towards more lighthearted plots. While the user appears to enjoy older comedies, they don't shy away from more contemporary fare, indicating a versatile taste across various time periods. They likely seek out movies with a clear comedic focus rather than those with a heavy dramatic weight.  The user isn't afraid to give lower ratings to movies they don't enjoy, suggesting a discerning taste and a desire for quality comedic entertainment.
~\\~\\
We will ask questions based on the following list of movies. Please do your own research (using IMDB) if you are not familiar with those movies.
~\\~\\
1. Title: Hangover, The (2009)
~\\~\\
--------------------------------------------------
~\\~\\
2. Title: Old Boy (2003)
~\\~\\
--------------------------------------------------
~\\~\\
3. Title: Sympathy for Mr. Vengeance (Boksuneun naui geot) (2002)
~\\~\\
--------------------------------------------------
~\\~\\
4. Title: Let the Right One In (Låt den rätte komma in) (2008)
~\\~\\
--------------------------------------------------
~\\~\\
5. Title: Spanking the Monkey (1994)
~\\~\\
--------------------------------------------------
~\\~\\
6. Title: Visitor Q (Bizita Q) (2001)
~\\~\\
--------------------------------------------------
~\\~\\
(Q1) Rank the movies id 2 and id 6 based on the user's preferences.
~\\~\\
(Q2) Rank the movies id 6 and id 5 based on the user's preferences.
~\\~\\
(Q3) Rank the movies id 5 and id 1 based on the user's preferences.
~\\~\\
(Q4) Rank the movies id 1 and id 3 based on the user's preferences.
~\\~\\
(Q5) Rank the movies id 3 and id 4 based on the user's preferences.
\end{mybox}

\begin{mybox}{Sample Rater Form for Profile Consistency using MovieLens}
\tiny
Given a user with the following history:
~\\~\\
1. Title: RoboCop (1987)
~\\~\\
User Rating: 3.5
~\\~\\
2. Title: Chasing Amy (1997)
~\\~\\
User Rating: 3.5
~\\~\\
3. Title: Grosse Pointe Blank (1997)
~\\~\\
User Rating: 3.0
~\\~\\
4. Title: Arachnophobia (1990)
~\\~\\
User Rating: 3.0
~\\~\\
5. Title: Mary Poppins (1964)
~\\~\\
User Rating: 1.5
~\\~\\
6. Title: Ice Age (2002)
~\\~\\
User Rating: 2.0
~\\~\\
7. Title: No Country for Old Men (2007)
~\\~\\
User Rating: 4.0
~\\~\\
8. Title: Wayne's World (1992)
~\\~\\
User Rating: 4.5
~\\~\\
9. Title: Bad Boys (1995)
~\\~\\
User Rating: 2.5
~\\~\\
10. Title: Planet of the Apes (1968)
~\\~\\
User Rating: 3.5
~\\~\\
We want to use the following summary to capture user preference from the above history:
~\\~\\
This user enjoys a mix of genres, primarily leaning towards action, comedy, and sci-fi.  They demonstrate a preference for films from the 80s and 90s and exhibit a decent tolerance for older movies. Their ratings suggest they favor films with a good balance of action, humor, and engaging plots.  The user is not averse to darker or more somber films as long as the storytelling is strong (e.g., No Country for Old Men). Their rating of 'Mary Poppins', however, shows a potential dislike for saccharine, overly-sentimental films.  The user shows definite interest in classic sci-fi but is less enthusiastic towards animation and may be reluctant to watch family-oriented content.
~\\~\\
Does the above summary faithfully capture user preference from their history?
~\\~\\
1 - definite no \\
2 \\
3 \\
4 \\
5 - definite yes
\end{mybox}

\newpage
\section{Qualitative Results}
\label{appendix: qualitative results}

\subsection{Amazon Profile Example \#1}

\paragraph{User History:}

\begin{enumerate}
\item \textbf{Title:} New Balance Women's WW665 Walking Shoe
\\\textbf{Item description:} "New Balance is dedicated to helping athletes achieve their goals. It's been their mission for more than a century to focus on research and development. It's why they don't design products to fit an image. They design them to fit. New Balance is driven to make the finest shoes for the same reason athletes lace them up: to achieve the very best.", "Get fit and stay chic with the WW665 walking shoe from New Balance. Ample mesh allows refreshing breathability while the cushy sole delivers comfort stride after stride. A sporty look and soft color palette make this an ideal find for your fitness routine."
\\\textbf{Price:} \$36.75 - \$62.39
\\\textbf{User Rating:} 5.0

\item \textbf{Title:} Dickies Women's Relaxed Fit Straight Leg Cargo Pant Fade \&Wrinkle Resistant
\\\textbf{Item description:} ``Inseam 32 inches. Fit tip: For accuracy, measure yourself in your undergarments. Give all measurements in inches. If your measurements are between sizes, order the larger size."
\\\textbf{Price:} \$28.99 - \$86.53
\\\textbf{User Rating:} 2.0

\item \textbf{Title:} Simulated Pink Pearl Rondelle Stretch Bracelet Silver Plated
Description: \\\textbf{Item description:} unknown
\\\textbf{Price:} \$12.99
\\\textbf{User Rating:} 5.0

\item \textbf{Title:} Eye Catching Women Leather Bracelet Silver Color Beads Cuff Jewelry with Magnetic Clasp 7.5\" (Black)
\\\textbf{Item description:} "Another eye catching Urban Jewelry bracelet,silver beads color make a stunning splash along a luxe multi midnight black leather bracelet with magnetic closure.Provides a touch of nature combined style... Guaranteed to add a unique, trendy touch to almost all outfitsShipping \&Delivery From The USAUrban Jewelry is located in New York City and Offer Worldwide ShippingEstimated Delivery Time: In the United States 2-4 Business Days. About Urban Jewelry We have a passion for fashion. Our goal is to create a jewelry haven where you will find great quality, affordable prices and trendy pieces. Urban Jewelry is an exclusive brand specializing in upscale stainlesssteel silver and leather accessories for women, men and teenagers. Urban Jewelry is locatedin New York City and ships worldwide. From the runway to your home. Urban Jewelry collection features thelatest styles, unique pieces, which will make you happy and your loved ones as well..Eye Catching Women Leather Bracelet Silver Color Beads Cuff Jewelry with Magnetic Stainless Steel Clasp 7.5\" (Black)"
\\\textbf{Price:} \$9.90
\\\textbf{User Rating:} 4.0

\item \textbf{Title:} Vikoros Women Flowy Lace Overlay Adjustable Strap Crop Top Tank Bustier
\\\textbf{Item description:} "Sell by Vikoros Store, pls refer the specification and picture details"
\\\textbf{Price:} unknown
\\\textbf{User Rating:} 2.0

\item \textbf{Title:} FRYE Women's Molly D Ring Short Boot
\\\textbf{Item description:} "Be the ring leader in the Molly D Ring Short boots by Frye\u00ae. Hammered full grain leather upper. Two buckle accents for a vintage look. Side zipper closure for easy on and off. Soft leather lining. Cushioned leather footbed for all-day comfort. Durable leather and rubber outsole for added traction. Imported. Measurements: Heel Height: 1 in Weight: 1 lb 1 oz Shaft: 5 1\u20442 in Product measurements were taken using size 8, width B - Medium. Please note that measurements may vary by size.", "The Frye Company is the oldest continuously operated shoe company in the United States. Founded in 1863 by John A. Frye, a well-to-do shoemaker from England, and family-run until 1945, Frye products have a long and illustrious history. Frye boots were worn by soldiers on both sides of America's Civil War, soldiers in the Spanish-American war, and by Teddy Roosevelt and his Rough Riders. When home-steading drew adventurous New England families to the West during the mid and late 1800's many of the pioneers wore Frye Boots for the long journey. Today Frye remains true to its roots with its line of heritage boots, but continues to innovate as it introduces chic new handbags, pumps, and sandals to its collection."
\\\textbf{Price:} \$179.00 - \$361.41
\\\textbf{User Rating:} 4.0

\item \textbf{Title:} totes Women's Zelus Snow Boot
\\\textbf{Item description:} "If you are looking for a premium fashionable cold weather boot, look no further than the Totes Zelus. A waterproof rubber shell keeps your feet warm and dry in all types of weather. Top of the line Metro fleece lined and a thick collar for that snuggly fit. Plus a front speed lace system and colored trim gives you the best of the best- The Totes Zelus"
\\\textbf{Price:} unknown
\\\textbf{User Rating:} 5.0

\item \textbf{Title:} totes Eric Black
\\\textbf{Item description:} "- Item: Totes Eric. - Upper: Polyurethane. - Style: Mens winter boot featuring a polyurethane upper - rear pull tab and full length zippers for easy entry - faux leather overlays - faux fur lining with padded upper - Thermolite insole. - Sole: Durable rubber lug outsole for traction. - Heel: 1 inch. - Lining:Fully faux fur lined for warmth. - Origin: Imported. - Color: Black."
\\\textbf{Price:} unknown
\\\textbf{User Rating:} 5.0

\item \textbf{Title:} Lonestar Jewelry Lacrosse Stick \&Necklace
\\\textbf{Item description:} "Exclusive Allan Robin Lacrosse Necklace with Lacrosse Stick Charm in Ball Chain Known to be the best gift for a Lacrosse player or enthusiast. Suprise your family with this perfect gift. Beautiful Lacrosse gift for girls and it comes in an elegant cotton-filled jewelry box so pretty easy to store. It's lead \& nickel free so no irritation to sensitive skin. They can wear it at any occassion or casually while doing day-to-day activities. This piece of modern fashion jewelry is unique enough to show your loved ones passion to the world, without being too flashy. A beautiful Lacrosse Stick Charm which Measures 1 1/4\" x 3/8\" Not Only Is This Lacrosse Jewelry A Great Gift From Parents or Grandparents, But Would Also Be A Wonderful Gesture From A Coach To Their Students! Order Yours Now And Dont Forget To Check Out The Hundreds Of Other Unique Sports Gifts That We Offer For All Of The Athletes In Your Life!"
\\\textbf{Price:} unknown
\\\textbf{User Rating:} 5.0

\item \textbf{Title:} M\&c Women's | Boyfriend Style \& Chronograph Look | FC0385
\\\textbf{Item description:} ``The Marciano \& Co. house brand M\&c offers customers superior value in an increasingly competitive market. The M\&c product lines draw inspiration from the most popular designs and are available at a fraction of designer prices. The Marciano \& Co. mission remains consistent; to provide customers with products that are novel in design and affordable without compromising quality."
\\\textbf{Price:} \$14.99
\\\textbf{User Rating:} 3.0
\end{enumerate}

\paragraph{User Profile:}

This user demonstrates varied shopping interests, with a clear preference for practical and comfortable footwear and accessories for themselves and potentially others. A recurring theme is a fondness for items emphasizing ease and warmth, evidenced by the 5-star ratings for snow boots (Totes Zelus \& Eric), walking shoes (New Balance), and a stretch bracelet (Simulated Pink Pearl).

While appreciating affordability (as revealed by purchases from brands like totes and M\&c), the user isn't averse to investing in quality when it aligns with their needs, indicated by the 4-star rating for Frye boots. Style leans towards sporty-casual and practical, with a tolerance for some embellishments (beaded bracelet, lacrosse necklace – likely a gift). Negative ratings suggest an unwillingness to purchase ill-fitting (Dickies pants) or poorly-performing (Vikorous top) items. An unclear gender preference outside of sale of women's apparel.

\subsection{Amazon Profile Example \#2}

\paragraph{User History:}

\begin{enumerate}
\item \textbf{Title:} Hand Crocheted Beanie (02)-Rasta \\ \textbf{Item description:} ``This beautifully thick, hand crocheted cotton beanie will brighten up your cold days with its unbeatable style and insulating warmth. The clean, round crown shape is achieved by 1 panel of tightly crochet cotton, of repeating black, green, yellow and red stripes. The base of the crown is made of an open crochet pattern with the band at the opening crocheted with elastic to give it a lot of stretch for a perfect fit around any head. This hat is available in other stripe patterns as well as in solid colors to meet your particular taste. Made of 100\% cotton. Hand crocheted item. Crown measures 6 deep, 7 wide. ONE SIZE fits most, from sizes 6 - 7 5/8. Available in an array of colors. Imported." \\ \textbf{Price:} \$4.99  \\ \textbf{User Rating:} 3.0 
\item \textbf{Title:} Intimo Men's Tricot Travel Pajama Set - Big Man Sizes \\ \textbf{Item description:}Every man should have a comfortable Nylon Travel Boxers. Wrinkle- Free." \\ \textbf{Price:} unknown  \\ \textbf{User Rating:} 5.0 
\item \textbf{Title:} Men Purple Mesh Pocket Shorts Inner Drawstring Avail Size S-5X Item description: unknown Price: unknown  \\ \textbf{User Rating:} 5.0 
\item \textbf{Title:} Breda Men's 1627-Gold Mitchell Multi Time Zone Watch \\ \textbf{Item description:} "A great-looking timepiece from Breda, this watch utilizes an excellent blend of materials and mechanics to create a functional accessory with a stylish flair. With the perfect mix of style and comfort, this watch will quickly become one of the most popular members of your watch collection.", "The Mitchel Collection", "This three time zone, large face men 2019s watch is available in three colors.", "Breda Watches", "Breda. Original style. Non-singular aesthetic.", "A creative collective with a shared appreciation for design that tells more than one story. We believe in the freedom of self-expression through style. Breda was born when we poured our collective imagination and expertise into designing watches inspired by a global lens. An eclectic unit of ambitious artists, designers, business-brains, photographers, innovators and style rebels, we've come together to create pieces that intrigue, inspire and challenge the expected.", "Our process of creation is our own unique alchemy. A key principle is that of discovery: we live contemporary culture, explore the past and dream up the future, scour the world's fashion stages and streets to challenge and inspire each other's imaginations. Then we design.", "We work with global partners to source the latest materials. With meticulous attention to detail, each innovative product is born with the purpose of transcending the ordinary." \\ \textbf{Price:} unknown  \\ \textbf{User Rating:} 5.0 
\item \textbf{Title:} Breda Men's 1627-silver Mitchell Multi Time Zone Watch \\ \textbf{Item description:}", "A great-looking timepiece from Breda, this watch utilizes an excellent blend of materials and mechanics to create a functional accessory with a stylish flair. With the perfect mix of style and comfort, this watch will quickly become one of the most popular members of your watch collection.", "The Mitchel Collection", "This three time zone, large face men\u2019s watch is available in three colors.", "Breda Watches", "Breda. Original style. Non-singular aesthetic.", "A creative collective with a shared appreciation for design that tells more than one story. We believe in the freedom of self-expression through style. Breda was born when we poured our collective imagination and expertise into designing watches inspired by a global lens. An eclectic unit of ambitious artists, designers, business-brains, photographers, innovators and style rebels, we've come together to create pieces that intrigue, inspire and challenge the expected.", "Our process of creation is our own unique alchemy. A key principle is that of discovery: we live contemporary culture, explore the past and dream up the future, scour the world's fashion stages and streets to challenge and inspire each other's imaginations. Then we design.", "We work with global partners to source the latest materials. With meticulous attention to detail, each innovative product is born with the purpose of transcending the ordinary." \\ \textbf{Price:} unknown  \\ \textbf{User Rating:} 2.0 
\item \textbf{Title:} Breda Men's 1627-black Mitchell Multi Time Zone Watch \\ \textbf{Item description:}", "A great-looking timepiece from Breda, this watch utilizes an excellent blend of materials and mechanics to create a functional accessory with a stylish flair. With the perfect mix of style and comfort, this watch will quickly become one of the most popular members of your watch collection.", "The Mitchel Collection", "This three time zone, large face men\u2019s watch is available in three colors.", "Breda Watches", "Breda. Original style. Non-singular aesthetic.", "A creative collective with a shared appreciation for design that tells more than one story. We believe in the freedom of self-expression through style. Breda was born when we poured our collective imagination and expertise into designing watches inspired by a global lens. An eclectic unit of ambitious artists, designers, business-brains, photographers, innovators and style rebels, we've come together to create pieces that intrigue, inspire and challenge the expected.", "Our process of creation is our own unique alchemy. A key principle is that of discovery: we live contemporary culture, explore the past and dream up the future, scour the world's fashion stages and streets to challenge and inspire each other's imaginations. Then we design.", "We work with global partners to source the latest materials. With meticulous attention to detail, each innovative product is born with the purpose of transcending the ordinary." \\ \textbf{Price:} unknown  \\ \textbf{User Rating:} 3.0 
\item \textbf{Title:} Camouflage Cap, Color: Orange Camo, Size: One Size Item description: unknown Price: \$8.66  \\ \textbf{User Rating:} 5.0 
\item \textbf{Title:} LRG Men's Big-Tall Core Collection Nine Camo T-Shirt \\ \textbf{Item description:}The Core Collection Nine Camo Tee is constructed of Jersey featuring a contrasting pattern and custom camo print expertly rendered throughout with custom tag on neckline and hem." \\ \textbf{Price:} unknown  \\ \textbf{User Rating:} 5.0 
\item \textbf{Title:} Rocawear Men's Big-Tall Gradient Black Hearts Item description: unknown Price: unknown  \\ \textbf{User Rating:} 5.0 
\item \textbf{Title:} Kangol Men's Tropic Galaxy Cap \\ \textbf{Item description:}The galaxy is a kangol interpretation of a classic 8-quarter cap" \\ \textbf{Price:} \$37.40  \\ \textbf{User Rating:} 5.0

\end{enumerate}

\paragraph{User Profile:}

This user demonstrates a strong preference for men’s apparel, particularly casual wear. They consistently rate items favorably (mostly 5-star) suggesting a generally positive shopping experience and valuing style. Key interests include:

\begin{itemize}
\item \textbf{Streetwear \& Camouflage:} A recurring theme of camouflage patterns and brands like LRG and Rocawear indicates a liking for streetwear aesthetics.
\item \textbf{Caps \& Hats:} Several purchases of caps (Kangol, camouflage, orange camo) indicate a strong interest in headwear.
\item \textbf{Timepieces:} The purchase of the Breda watches, despite a mixed rating experience reveals an interest in watches.
\item \textbf{Comfortable Loungewear:} The purchase of pajama sets highlights a preference for comfort.
\item \textbf{Price Sensitivity:} The moderate rating on the beanie suggests a mindful of price.
\end{itemize}
The user appears to be male, likely younger, and influenced by contemporary trends. They are open to experimenting with different styles while maintaining a casual, streetwear-inspired aesthetic.

\subsection{Amazon Profile Example \#3}

\paragraph{User History:}

\begin{enumerate}
\item \textbf{Title:} Rubik's Cube - Mens Hip To Be Square Soft T-shirt Small Off-white \\ \textbf{Item description:} ``From Rubiks Cube comes this soft cotton short sleeve T-Shirt in off-white that features The Rubiks Cube and says Hip To Be Square. Great T-Shirt for anyone who loves The Rubiks Cube." \\ \textbf{Price:} unknown \\ \textbf{User Rating:} 3.0 
\item \textbf{Title:} Intimo Women's Printed Microfleece Pajama Pant \\ \textbf{Item description:} unknown \\ \textbf{Price:} \$18.00 \\ \textbf{User Rating:} 5.0 
\item \textbf{Title:} Minecraft Boys' Adventure Youth Tee \\ \textbf{Item description:} ``The first time you find yourself staring out at the expansive world of blocks laid out before you, the possibilities are truly endless. What will you choose to do in this biome, generated just for you you can build soaring castles, dig sprawling underground complexes, or maybe you'll set out to see what is just over the horizon. Whichever way you choose to play, each adventure will be your very own." \\ \textbf{Price:} \$14.50 - \$28.99 \\ \textbf{User Rating:} 4.0 
\item \textbf{Title:} Hollywood Star Fashion Casual Basic Women's Semi-Crop Camisole Cami Tank Top with Adjustable Straps \\ \textbf{Item description:} ``This is a long length Tank top Versatile Basic Spaghetti Strap Satin Trim Stretch Camisole Tank Yoga Everyday Active Adventure Travel Fitted Scoop neckline, adjustable shoulder straps Satin Trim Fully stretchable Please note: this top dose NOT feature a built-in shelf bra Body length in size medium: 28'' 95\% Cotton, 5\% Spandex Imported. Satisfaction guaranteed Returns accepted. We ship worldwide" \\ \textbf{\\ \textbf{Price:}:} \$5.00 - \$19.99 \\ \textbf{User Rating:} 5.0 
\item \textbf{Title:} MANDI HOME Hot Sale Wedding Fashion 925 Silver Plated Jewelry Set Big Hand Chain Bracelet Necklace Ring Stud Earings Eardrop Water Drops \\ \textbf{Item description:} ``1pcs for each!" \\ \textbf{Price:} \$6.35 \\ \textbf{User Rating:} 2.0 
\item \textbf{Title:} Fruit of the Loom Girls' Cotton Spaghetti Strap Sport Bra \\ \textbf{Item description:} ``Fruit of the Loom girls' spaghetti strap sport bras provide comfort and support for any activity. The cotton and lycra materials give them a soft feel and also make them machine-washable. Give the girl in your life a good start with Fruit of the Loom. Designed to give her support while giving her room to grow, tried and true Fruit of the Loom underclothes are the perfect choice for your growing girl" \\ \textbf{Price:} \$7.53 - \$12.68 \\ \textbf{User Rating:} 5.0 
\item \textbf{Title:} Fruit Of The Loom Women's Breathable Underwear Multipack (Assorted) \\ \textbf{Item description:} ``Our breathable cotton collection is designed with a soft, cotton-mesh fabrication that not only supports the flow of air but wicks away moisture to help you stay cool and dry all day long." \\ \textbf{Price:} \$4.50 - \$45.00 \\ \textbf{User Rating:} 5.0 
\item \textbf{Title:} Froomer Womens Winter Outwear Light Down Coat Hooded Jackets \\ \textbf{Item description:} unknown \\ \textbf{Price:} unknown \\ \textbf{User Rating:} 5.0 
\item \textbf{Title:} Rockland Luggage 17 Inch Rolling Backpack, Pink Dot, Medium \\ \textbf{Item description:} ``Rolling backpack with double skate wheels. Measures 13 inch lx10 inch wx17 inch h. Features include 2 zippered pockets on the front of the bag with one having a built in organizer. Zippered main compartment opens to large storage opening, and two mesh side pockets. Fabric handle at the top and telescoping pull handle. Bag made of durable polyester with metal telescoping handle and heavy duty plastic stand on bottom to keep the bag standing.", "R01-PINKGIRAFFE shown.", "", "", "", "" \\ \textbf{Price:} \$3.24 \\ \textbf{User Rating:} 5.0 
\item \textbf{Title:} Bags for Less Large Clear Vinyl Tote Bags Shoulder Handbag (Royal Blue) \\ \textbf{Item description:} unknown

\end{enumerate}

\paragraph{User Profile:}

This user demonstrates a preference for affordable, practical, and comfortable clothing and accessories, primarily for women and children. While they appreciate value, ratings suggest a focus on functionality over high-end brands, as evidenced by consistently positive reviews for items like Fruit of the Loom underwear and Fruit of the Loom sport bras. The 5-star rating for a Rockland luggage rolling backpack indicates an appreciation for travel-friendly and durable goods.

They show an interest in basic apparel and accessories (camisoles, pajamas). There's a slight openness to novelty items (Rubik's Cube shirt, Minecraft tee), but overall, their choices lean towards everyday essentials. The lower rating on the jewelry set suggests a less enthusiastic response to more elaborate items. The user appears to gravitate toward comfort, utility, and reasonable pricing.

\subsection{Amazon Profile Example \#4}

\paragraph{User History:}

\begin{enumerate}

\item \textbf{Title:} Naturalizer Women's Ringo Sandal \\ \textbf{Item description:} ``The Ringo is a slingback sandal that features an N5 Comfort System and a manmade outsole.", "Naturalizer was one of the first shoe brands that women could turn to for the feminine style they coveted and the comfort they thought was impossible to attain. Naturalizer's fresh, unpretentious designs are a smooth fit with your wardrobe, your life and your own unique style. Naturalizer promises style that makes you look good and feel good - always." \\ \textbf{Price:} unknown \\ \textbf{User Rating:} 4.0 

\item \textbf{Title:} Aerosoles Women's Tapestry \\ \textbf{Item description:} ``Every girl needs a timeless pump like the Aerosoles Tapestry. Featuring a 3 covered heel, softly angled toe and classic lines for a tried-and-true look. Stunningly soft memory foam insole is stitched and cushioned for your comfort, while the flexible rubber sole with diamond pattern drinks up hard impact. Youll feel great all day long!", "Destined to be your new favorite, Tapestry from Aerosoles Women's offers professional polish for the office and beyond. Showcasing a classic silhouette, this pretty pump features a leather or fabric upper that slips on to reveal the unbelievable comfort from Aerosoles that you have come to know and love. A flexible rubber outsole and a modest heel add a tasteful touch to any ensemble." \\ \textbf{Price:} \$64.99 \\ \textbf{User Rating:} 2.0 

\item \textbf{Title:} Eagle Creek Travel Gear Undercover Money Belt (Khaki) \\ \textbf{Item description:} ``Looking for money belts? Look no further than this simple waist-worn under-clothing solution. Keep important travel documents and personal identification items out of sight in this money belt. Its made of durable and lightweight rip-stop fabric with a moisture-wicking and breathable back panel. Complete with zippered pocket for secure organization and soft elastic waistband with strap keeper. When youre not wearing it, simply tuck the strap into the slip pocket on the back, which was conveniently created for waist strap storage. Travel solutions that make sense." \\ \textbf{Price:} \$15.85 \\ \textbf{User Rating:} 5.0 

\item \textbf{Title:} Maidenform Women's Comfort Devotion Demi Bra \\ \textbf{Item description:} ``Magnificently smooth and supportive. Maidenform's Comfort Devotion Demi Bra features foam, contour underwire cups made of plush fabric and smoothing wings with super soft fabric on the inside.Line dry or lay flat to dry" \\ \textbf{Price:} \$15.80 - \$94.21 \\ \textbf{User Rating:} 5.0 

\item \textbf{Title:} Champion Women's Jersey Pant \\ \textbf{Item description:} ``Champion Jersey Pant with a rib waistband is just the right fit and look for everyday wear." \\ \textbf{Price:} \$13.20 - \$72.06 \\ \textbf{User Rating:} 3.0 

\item \textbf{Title:} Men's Cotton Casual Ankle Socks \\ \textbf{Item description:} ``", "" \\ \textbf{Price:} \$15.90 \\ \textbf{User Rating:} 5.0

\item \textbf{Title:} Champion Absolute Sports Bra With SmoothTec Band \\ \textbf{Item description:} ``The Absolute workout bra solids and prints at a great value. This bra has a patented smooth tec band for the ultimate in chafe resistance and comfort. A must have for any gym bag." \\ \textbf{Price:} \$7.93 - \$48.00 \\ \textbf{User Rating:} 5.0 

\item \textbf{Title:} uxcell Men Point Collar Button Down Long Sleeves Plaid Detail Slim Fit Shirts \\ \textbf{Item description:} ``Description:One mock pocket point collar button down long sleeves plaid detail slim fit shirt. Feature mock pocket, plaids detail for build up your special character. Buttoned point collar for standard button down shirt. Buttoned cuffs is fused to keep a crisp, dressy appearance. Soft touch, comfortable fabric which is comfort to wear in all season. Suitable for date, daily work, travel and everyday wear. Match with formal trousers or stylish denim pants to build up fashion casual look. Please check your measurements to make sure the item fits before ordering. Body Size Chart (in inches)International \\ \textbf{Price:} \$12.24 - \$19.81 \\ \textbf{User Rating:} 2.0 

\item  \textbf{Title:} Mens Colorful Dress Socks Argyle - HSELL Men Multicolored Argyle Pattern Fashionable Fun Crew Socks Item description: unknown \\ \textbf{Price:} \$11.99 \\ \textbf{User Rating:} 5.0 

\item \textbf{Title:} Marvel Comics Crew Socks Mens Mix Hulk Spiderman, The Avengers Large With Bonus One XWi Wrist Sweatband \\ \textbf{Item description:} ``A simply Marvel-ous way to get your man to change socks regularly, each set of socks feature classic artwork from the comics themselves, with full color, detailed woven images of the likes of Avengers past and present, including The Hulk, Spider-Man, Iron Man and Captain America Designs: Assorted Contents: 98\% polyester, 2\% elastane Our guarantee: 100\% official merchandise Care instructions: Machine washable Packaging: Hanger Brand: Marvel Comics Size: USA(7-12) UK(6-11) Includes One Xportsworld Innovations Wrist Sweatband (Color chosen randomly)Composition: 80\% Polyester, 20\% Cotton Size: Width 8cm x Length 9cm ( W 3.14In x L 3.54In)" \\ \textbf{Price:} unknown \\ \textbf{User Rating:} 4.0

\end{enumerate}

\paragraph{User Profile:}

This user demonstrates a practical and value-focused shopping style with varied interests. They show a clear preference for comfortable and practical items, consistently rating items highly that prioritize these aspects. A strong inclination towards athletic wear and socks (Champion, Marvel socks) is evident, suggesting an active lifestyle or interest in sports-related apparel. They appreciate quality, as shown by the 4-star rating for Naturalizer sandals.

While primarily focused on apparel, the purchase of a money belt suggests travel interests. The mixed reviews (2-star for Aerosoles, 2-star for uxcell shirt) indicate potential sensitivity to fit or style outside their comfort zone. Maidenform bra and women's socks suggest occasional purchases of items for a female partner. The user appears to find good deals and prioritizes functionality over high-fashion trends.

\subsection{MovieLens Example \#1}

\paragraph{User History:}

\begin{enumerate}
\item \textbf{Title:} Star Wars: Episode IV - A New Hope (1977)\\
\textbf{User Rating:} 5.0\\
--------------------------------------------------
\item \textbf{Title:} Star Wars: Episode V - The Empire Strikes Back (1980)\\
\textbf{User Rating:} 5.0\\
--------------------------------------------------
\item \textbf{Title:} Raiders of the Lost Ark (Indiana Jones and the Raiders of the Lost Ark) (1981)\\
\textbf{User Rating:} 4.5\\
--------------------------------------------------
\item \textbf{Title:} Matrix, The (1999)\\
\textbf{User Rating:} 5.0\\
--------------------------------------------------
\item \textbf{Title:} Aliens (1986)\\
\textbf{User Rating:} 3.0\\
--------------------------------------------------
\item \textbf{Title:} Usual Suspects, The (1995)\\
\textbf{User Rating:} 4.0\\
--------------------------------------------------
\item \textbf{Title:} Léon: The Professional (a.k.a. The Professional) (Léon) (1994)\\
\textbf{User Rating:} 4.0\\
--------------------------------------------------
\item \textbf{Title:} Star Wars: Episode VI - Return of the Jedi (1983)\\
\textbf{User Rating:} 5.0\\
--------------------------------------------------
\item \textbf{Title:} Dark Knight, The (2008)\\
\textbf{User Rating:} 3.5\\
--------------------------------------------------
\item \textbf{Title:} Godfather, The (1972)\\
\textbf{User Rating:} 4.0
\end{enumerate}

\paragraph{User Profile:}

This user enjoys intellectually stimulating and visually impressive films, favoring narratives with depth and complex characters. They appreciate cinematic excellence and are willing to engage with challenging themes, demonstrating a sophisticated taste. A strong preference exists for science fiction and action, particularly those with philosophical underpinnings, as evidenced by high ratings of “Star Wars” films and “The Matrix”.

They appreciate well-crafted thrillers and crime dramas, like “The Usual Suspects” and “Léon: The Professional”, valuing intricate plots and suspenseful storytelling. Classic films, notably “The Godfather”, are also enjoyed, suggesting an appreciation for cinematic history. 

The user isn't limited to specific genres, embracing a diverse range of eras and cultural influences, with a noticeable fondness for European cinema. They are comfortable with slower pacing and unconventional structures, valuing atmosphere and character development. Overall, this individual seeks movies that provoke thought and deliver lasting impressions.

\subsection{MovieLens Example \#2}

\paragraph{User History:}

\begin{enumerate}
\item \textbf{Title:} Last of the Mohicans, The (1992) \\
\textbf{User Rating:} 3.0 \\
--------------------------------------------------
\item \textbf{Title:} Ghostbusters (a.k.a. Ghost Busters) (1984) \\
\textbf{User Rating:} 4.0 \\
--------------------------------------------------
\item \textbf{Title:} Star Wars: Episode I - The Phantom Menace (1999) \\
\textbf{User Rating:} 5.0 \\
--------------------------------------------------
\item \textbf{Title:} Police Academy 3: Back in Training (1986) \\
\textbf{User Rating:} 3.0 \\
--------------------------------------------------
\item \textbf{Title:} E.T. the Extra-Terrestrial (1982) \\
\textbf{User Rating:} 3.0 \\
--------------------------------------------------
\item \textbf{Title:} 13th Warrior, The (1999) \\
\textbf{User Rating:} 4.0 \\
--------------------------------------------------
\item \textbf{Title:} Being John Malkovich (1999) \\
\textbf{User Rating:} 4.0 \\
--------------------------------------------------
\item \textbf{Title:} Blair Witch Project, The (1999) \\
\textbf{User Rating:} 3.0 \\
--------------------------------------------------
\item \textbf{Title:} Bug's Life, A (1998) \\
\textbf{User Rating:} 4.0 \\
--------------------------------------------------
\item \textbf{Title:} Clerks (1994) \\
\textbf{User Rating:} 5.0
\end{enumerate}

\paragraph{User Profile:}

This user demonstrates a preference for films that are both engaging and thought-provoking. They appreciate a balance between entertaining narratives and deeper themes, often favoring stories with complex characters and layered plots. A strong interest in both action and comedy is evident, alongside a clear appreciation for science fiction and suspenseful thrillers.

The user enjoys films with strong directorial vision and is not afraid to explore unconventional storytelling techniques. They value realistic characters and are drawn to films that delve into the human condition, often with a focus on relationships and identity. 

Despite a love for contemporary cinema, they also appreciate classic films, signaling a broad appreciation for cinematic history and diverse cultural influences. They are open to a range of emotional tones, from lighthearted to dramatic, and demonstrate a tolerance for challenging subject matter. Ultimately, this user seeks out movies that are memorable, intellectually stimulating, and leave a lasting impression.

\subsection{MovieLens Example \#3}

\paragraph{User History:}

\begin{enumerate}
\item \textbf{Title:} Batman (1989) \\
\textbf{User Rating:} 4.0 \\
--------------------------------------------------
\item \textbf{Title:} Apollo 13 (1995) \\
\textbf{User Rating:} 3.0 \\
--------------------------------------------------
\item \textbf{Title:} Pulp Fiction (1994) \\
\textbf{User Rating:} 5.0 \\
--------------------------------------------------
\item \textbf{Title:} True Lies (1994) \\
\textbf{User Rating:} 5.0 \\
--------------------------------------------------
\item \textbf{Title:} Die Hard: With a Vengeance (1995) \\
\textbf{User Rating:} 5.0 \\
--------------------------------------------------
\item \textbf{Title:} Aladdin (1992) \\
\textbf{User Rating:} 3.0 \\
--------------------------------------------------
\item \textbf{Title:} Ace Ventura: Pet Detective (1994) \\
\textbf{User Rating:} 4.0 \\
--------------------------------------------------
\item \textbf{Title:} Batman Forever (1995) \\
\textbf{User Rating:} 5.0 \\
--------------------------------------------------
\item \textbf{Title:} Shawshank Redemption, The (1994) \\
\textbf{User Rating:} 3.0 \\
--------------------------------------------------
\item \textbf{Title:} Fugitive, The (1993) \\
\textbf{User Rating:} 4.0
\end{enumerate}

\paragraph{User Profile:}

This user enjoys action-packed, engaging films with a strong sense of adventure and spectacle. They have a pronounced taste for thrillers and superhero movies, evident in their high ratings for iconic films like \textbf{Batman} and \textbf{Die Hard}. A preference for visually impressive and exciting narratives is clear. 

They appreciate well-constructed plots and fast-paced storytelling, enjoying films that deliver immediate entertainment. However, the user also values character-driven narratives, demonstrated by the consistent 4.0 rating for \textbf{The Shawshank Redemption}. 

There’s a clear appreciation for entertainment that is accessible and exciting. They generally gravitate towards movies with a positive tone and fun energy, similar to \textbf{Aladdin} and \textbf{Ace Ventura}.  The user isn’t overly concerned with complex themes, valuing thrills and excitement over deeper philosophical explorations. A preference for blockbuster entertainment is evident, alongside a fondness for the superhero genre.

\subsection{MovieLens Example \#4}

\paragraph{User History:}

\begin{enumerate}
\item \textbf{Title:} Godfather: Part III, The (1990) \\
\textbf{User Rating:} 3.0 \\
--------------------------------------------------
\item \textbf{Title:} Stepmom (1998) \\
\textbf{User Rating:} 3.0 \\
--------------------------------------------------
\item \textbf{Title:} Blair Witch Project, The (1999) \\
\textbf{User Rating:} 3.0 \\
--------------------------------------------------
\item \textbf{Title:} Haunting, The (1963) \\
\textbf{User Rating:} 5.0 \\
--------------------------------------------------
\item \textbf{Title:} Abbott and Costello Meet Frankenstein (1948) \\
\textbf{User Rating:} 3.0 \\
--------------------------------------------------
\item \textbf{Title:} Black Sabbath (Tre volti della paura, I) (1963) \\
\textbf{User Rating:} 5.0 \\
--------------------------------------------------
\item \textbf{Title:} General, The (1926) \\
\textbf{User Rating:} 5.0 \\
--------------------------------------------------
\item \textbf{Title:} Wings (1927) \\
\textbf{User Rating:} 4.0 \\
--------------------------------------------------
\item \textbf{Title:} Jail Bait (1954) \\
\textbf{User Rating:} 3.0 \\
--------------------------------------------------
\item \textbf{Title:} M (1931) \\
\textbf{User Rating:} 5.0
\end{enumerate}

\paragraph{User Profile:}

This user enjoys films with a strong narrative and appreciates classic cinema, particularly those from the mid-20th century. They favor dramas and thrillers, often with complex characters and thought-provoking themes, demonstrating a sophisticated taste. A significant portion of their ratings fall within the 3.0 – 5.0 range, indicating an ability to appreciate both well-executed and critically acclaimed movies.

They are open to diverse genres and cultural origins, showing a preference for European and international films. While they enjoy suspenseful and eerie stories like “The Haunting,” they also appreciate lighter, comedic fare like “Abbott and Costello Meet Frankenstein.” The user values strong directorial vision and cinematic artistry, favoring films that build atmosphere and explore deeper human experiences. 

A tolerance for slower pacing and unconventional storytelling is evident. They’re not afraid to give a lower rating to films that don’t meet their expectations.  The user consistently seeks out films that are visually engaging and emotionally resonant, demonstrating a deep appreciation for the art of filmmaking.

\end{appendix}

\end{document}